\documentclass[runningheads,a4paper]{llncs}

%
%
%
\usepackage{booktabs} 
\usepackage{microtype}
\usepackage{xspace}

\usepackage{array}
\usepackage{dsfont}
\usepackage{pifont}
\usepackage{times}
\usepackage{helvet}
\usepackage{courier}
\usepackage{color}
\usepackage{colortbl}
\usepackage{mathrsfs}
\usepackage{amscd}
\usepackage{stmaryrd}
\usepackage{amssymb}
\usepackage{amsmath}
\usepackage{graphicx}
\usepackage{tikz}
\usepackage{algorithm}
\usepackage[noend]{algpseudocode}
\algrenewcommand\algorithmicindent{0.750em}%
\usepackage{subfig}
\usepackage{multirow}

\newcommand{\commentout}[1]{}

\newcommand{\labels}{\mathcal{L}}
\newcommand{\networks}{\mathcal{N}}

\newcommand{\testsuites}{\mathcal{T}}
\newcommand{\neuronpairs}{\mathcal{O}}
\newcommand{\distance}[2]{ ||#1||_{#2}}
\newcommand{\valuefunction}{g}

\newcommand{\neuronpair}{\alpha}
\newcommand{\covered}[3]{{#1}^{#2}#3}

\newcommand{\metric}{M}

\newcommand{\setoffeatures}{\Psi}
\newcommand{\feature}{\psi}





\newcommand\scalemath[2]{\scalebox{#1}{\mbox{\ensuremath{\displaystyle #2}}}}

\newcommand\true{\textsf{true}\xspace}
\newcommand\false{\textsf{false}\xspace}

\begin{document}

\title{Testing Deep Neural Networks}

\author{Youcheng Sun\inst{1} \and Xiaowei Huang\inst{2} \and Daniel Kroening\inst{1} \and James Sharp\inst{3} \and Matthew Hill\inst{3} \and Rob Ashmore\inst{3}}

\institute{Department of Computer Science, University of Oxford, UK
\and
Department of Computer Science, University of Liverpool, UK
\and
Defence Science and Technology Laboratory (Dstl), UK}

\authorrunning{Sun, Huang, Kroening, Sharp, Hill, Ashmore}
\maketitle



%


\begin{abstract}
Deep neural networks (DNNs) have a wide range of applications, and software
employing them must be thoroughly tested, especially in safety-critical
domains.  However, traditional software test coverage metrics cannot be
applied directly to DNNs.  In this paper, inspired by the MC/DC coverage
criterion, we propose a family of four novel test criteria that are tailored to
structural features of DNNs and their semantics.  We validate the criteria
by demonstrating that the generated test inputs guided via our proposed coverage
criteria are able to capture undesired behaviours in a DNN. Test cases
are generated using a symbolic approach and a gradient-based heuristic search.
By comparing them with existing methods, we show that our criteria achieve a 
balance between their ability to find bugs (proxied using adversarial examples) 
and the computational cost of test case generation.
Our experiments are conducted on state-of-the-art DNNs obtained
using popular open source datasets, including MNIST, CIFAR-10 and ImageNet.
\end{abstract}

\keywords{neural networks, test criteria, test case generation}

\section{Introduction}

Artificial intelligence (AI), specifically deep neural networks (DNNs), can deliver 
human-level results in some specialist tasks. There is now a prospect of
a wide-scale deployment of DNNs in safety-critical applications 
such as self-driving cars.  This naturally raises the question how software implementing this
technology should be tested, validated and ultimately certified to meet the
requirements of the relevant safety standards \cite{huang2018safety}.

Research and industrial communities worldwide are  taking significant efforts towards the best practice for 
the safety assurance for learning-enabled autonomous systems. Among all efforts, we mention a few, including  
a proposal under consideration by IEEE to form an official technical committee
for verification  of autonomous  systems~\cite{vaswg}, the Assuring Autonomy International Programme \cite{aaip} which  investigates the certification of learned models, etc. Moreover, as stated in \cite{scsc}, the machine learning algorithm should be verified with an appropriate
level of coverage. This paper develops a technical solution to support these efforts.

The industry relies on testing as a primary means to provide
stakeholders with information about the quality of the software product or
service~\cite{kaner2006}.
Research in software testing has resulted in a broad range of approaches
to assess different software criticality levels 
(comprehensive reviews are given in e.g.,~\cite{ZHM1997,JH2011,SWMPHS2017}).  
In white-box testing, the structure of a program is exploited to
(perhaps automatically) generate test cases. 
Code coverage criteria (or metrics) have been designed to guide the
generation of test cases and evaluate the completeness of a test suite. 
For example, a test suite with 100\% statement coverage exercises all statements at
least once.  While it is arguable the extent to which coverage  ensures correct
functionality, high coverage is able to increase users' confidence (or
trust) in the program~\cite{ZHM1997}.
Structural coverage metrics are used
as a means of assessment in several high-tier safety standards, which establish
both statement and modified condition/decision coverage (MC/DC) are
applicable measures.  MC/DC was developed by NASA and has been widely
adopted.  It is used in avionics software development guidance to ensure
adequate testing of applications with the highest criticality~\cite{do178c}.

AI systems that use DNNs are typically implemented in software.  However,
(white-box) testing for traditional software cannot be directly applied to
DNNs.  In particular, the flow of control in DNNs is not sufficient to
represent the knowledge that is learned during the training phase and thus
it is not obvious how to define structural coverage criteria for
DNNs~\cite{AL2017}.  Meanwhile, DNNs exhibit different "bugs" from
traditional software.  Notably, \emph{adversarial
examples}~\cite{szegedy2014intriguing}, in which two
apparently indistinguishable inputs cause contradicted decisions, are one of the
most prominent safety concerns in DNNs.

We believe that the testing of DNNs, guided by proper coverage criteria, must help
developers find bugs, quantify network robustness and  analyse its internal structures.
Also, developers can use the generated adversarial examples to re-train and improve
the network. These enable developers to understand and compare different networks
for any safety related argument.  

Technically, DNNs
contain not only an architecture, which bears some similarity with
traditional software programs, but also a large set of parameters, which are
tuned by the training procedure.  Any approach to testing DNNs needs to
consider the unique properties of DNNs, such as the syntactic connections
between neurons in adjacent layers (neurons in a given layer interact with
each other and then pass information to higher layers), the ReLU (Rectified
Linear Unit) activation functions and the semantic relationship between
layers. 

In this paper, we propose a novel, white-box testing methodology for DNNs.
%
In particular, we propose a family of four test criteria, inspired by the MC/DC test
criterion~\cite{HVCR2001} from traditional software testing, that {fit the
distinct properties of DNNs mentioned above.} 
It is known that an overly weak criterion may lead to insufficient testing,
e.g., 100\% neuron coverage~\cite{PCYJ2017} can be
achieved by a simple test suite comprised of few input vectors from the
training dataset, and an overly strong criterion may lead to computational
intractability, e.g., 100\% safety coverage is shown in \cite{WHK2018} as
difficult to achieve (NP-hard). 
Our criteria, when applied to guide test case generation, can achieve both
intensive testing (i.e., non-trivial to achieve 100\% coverage) and
computational feasibility. 
As a matter of fact, excepting the safety coverage criterion in \cite{WHK2018}, all existing structural 
test coverage criteria for DNNs~\cite{PCYJ2017,ma2018deepgauge} are special cases of our proposed criteria.
Our criteria are the first work that is able to capture and quantify causal relations existing in a DNN that are
critical for understanding the neural network behaviour~\cite{YCNFL2015,olah2018the}.

\commentout{
There exist mainly two coverage criteria for DNNs: neuron coverage~\cite{PCYJ2017}
(and its generalisations \cite{ma2018deepgauge}) and
safety coverage~\cite{WHK2018}, both of which have been proposed recently.  Our
experiments show that neuron coverage is too coarse: 100\% coverage can be
achieved by a simple test suite comprised of few input vectors from the training dataset.  On
the other hand, safety coverage is black-box, too fine, and it is
computationally too expensive to compute a test suite in reasonable time.
Moreover, our four proposed criteria are incomparable with each other, and
complement each other in capturing different kinds of causal relationship between DNN features and
in guiding the generation of test cases.
}

Subsequently, we validate the utility of our MC/DC variant by applying it to different approaches 
to DNN testing. At first, we adopt state-of-the-art concolic testing for 
DNNs~\cite{sun2018concolic}.
Concolic testing combines concrete testing and symbolic encoding of DNNs. 
Specifically, the linear programming (LP) based algorithm produces a new test case (i.e., an input vector) by 
encoding a fragment of the DNN 
and then optimises
over an objective that is to minimise the difference between the new and the
current input vector.  LP can be solved efficiently in PTIME, so the concolic
test case generation algorithms can generate a test suite with low
computational cost for small to medium-sized DNNs.  Meanwhile, we develop a gradient descent (GD)
based algorithm that takes the test condition as the optimisation objective and
searches for satisfiable test cases in an adaptive manner under the guidance
of the first-order derivative of the DNNs, which is able to work with
large-scale DNNs.
%

Finally, 
%
we experiment with our test coverage criteria on state-of-the-art neural networks of different sizes 
(from a few hundred up to millions of neurons) to demonstrate their utility with respect to
four aspects: 
\emph{\ding{192}~bug finding
\ding{193}~DNN safety statistics \ding{194}~testing efficiency \ding{195}~DNN internal structure
analysis}. Bugs here refer to adversarial examples.
  


\section{Preliminaries: Deep Neural Networks}
\label{sec:dnn}

\newcommand{\real}{\mathds{R}}

A (feedforward and deep) neural network, or DNN, is a tuple $\networks=(L,
T, \Phi)$, where $L=\{L_k~|~k\in\{1..K\}\}$ is a set of layers,
$T\subseteq L\times L$ is a set of connections between layers and 
$\Phi=\{\phi_k~|~k\in\{2,\ldots,K\}\}$ is a set of functions, one for each non-input layer.
%
%
In a DNN, $L_1$ is the \emph{input} layer, $L_{K}$ is the \emph{output} layer
and layers other than input and output layers are called \emph{hidden layers}.
Each layer $L_k$ consists of $s_k$ 
\emph{neurons} (or nodes).
The $l$-th node of layer $k$ is denoted by $n_{k,l}$.
Each node $n_{k,l}$ for $1 < k< K$ and  $1\leq l\leq s_k$ is associated with two variables $u_{k,l}$ and $v_{k,l}$, to record  its values before and after an activation function,
respectively.
The ReLU \cite{relu} is by far the most popular 
activation function for DNNs, according to which the \emph{activation 
value} of each node of hidden layers is defined as
\begin{equation}
    \label{eq:relu}
    v_{k,l}=ReLU(u_{k,l})=
    \begin{cases}
        u_{k,l} &\mbox{  if } u_{k,l}\geq 0 \\
            0 & \mbox{  otherwise}
    \end{cases}
\end{equation}
Each input node $n_{1,l}$ for $1\leq l\leq s_1$ is associated with a
variable $v_{1,l}$ and each output node $n_{K,l}$ for $1\leq l\leq s_K$ is
associated with a variable $u_{K,l}$, because no activation function is
applied on them.  We let $D_{L_k} = \mathbb{R}^{s_k}$ be the vector space
associated with layer $L_k$, one dimension for each variable $v_{k,l}$. 
Notably, every point $x\in D_{L_1}$ is an input.

Except for inputs, every node is connected to nodes in the
preceding layer by pre-trained parameters such that for all $k$ and $l$ with
$2 \leq k\leq K$ and  $1\leq l\leq s_k$, we have:
\begin{equation}
  \label{eq:sum}
  u_{k,l}=b_{k,l}+\sum_{1\leq h \leq s_{k-1}} w_{k-1, h, l}\cdot v_{k-1,h}
\end{equation}
where $w_{k-1,h,l}$ is the weight for the connection between
$n_{k-1,h}$ (i.e., the $h$-th node of layer $k-1$) and $n_{k,l}$
(i.e., the $l$-th node of layer $k$), and $b_{k,l}$ is the
so-called \emph{bias} for node $n_{k,l}$.  We note that this
definition can express both fully-connected functions and
convolutional functions.
The function $\phi_k$ is the combination of Equations (\ref{eq:relu}) and (\ref{eq:sum}). Owing to the use of the ReLU as in \eqref{eq:relu}, the behavior of a neural
network is highly non-linear. 

Finally, for any input, the DNN assigns a \emph{label}, that is,
the index of the node of output layer with the largest value:
$\mathit{label}=\mathrm{argmax}_{1\leq l\leq s_K}u_{K,l}$.
Let $\labels$ be the set of labels. 
\begin{example}
Figure \ref{fig:nn} is a simple DNN with four layers. 
Its input space is $D_{L_1}=\real^2$ where $\real$ is the set of real numbers.
\vspace{-.25cm}
\begin{figure}[htp!]
\centering
\def\layersep{1.8cm}
\scalebox{0.85}{
\begin{tikzpicture}[shorten >=1pt,->,draw=black!50, node distance=\layersep]
    \tikzstyle{every pin edge}=[<-,shorten <=1pt]
    \tikzstyle{neuron}=[circle,fill=black!25,minimum size=15pt,inner sep=0pt]
    \tikzstyle{input neuron}=[neuron, fill=green!50];
    \tikzstyle{output neuron}=[neuron, fill=red!50];
    \tikzstyle{hidden neuron}=[neuron, fill=blue!50];
    \tikzstyle{annot} = [text width=4em, text centered]

    \foreach \name / \y in {1,...,2}
        \node[input neuron, pin=left:$v_{1,\y}$] (I-\name) at (0,-\y) {};

    \foreach \name / \y in {1,...,3}
        \path[yshift=0.5cm]
            node[hidden neuron] (H1-\name) at (\layersep,-\y cm) {};

    \foreach \name / \y in {1,...,3}
        \path[yshift=0.5cm]
            node[hidden neuron] (H2-\name) at (\layersep*2,-\y cm) {};

    \node[output neuron,pin={[pin edge={->}]right:$u_{4,1}$}, right of=H2-2, yshift=0.5cm] (O1) {};
    \node[output neuron,pin={[pin edge={->}]right:$u_{4,2}$}, right of=H2-2, yshift=-0.5cm] (O2) {};

    \foreach \source in {1,...,2}
        \foreach \dest in {1,...,3}
            \path (I-\source) edge (H1-\dest);

    \foreach \source in {1,...,3}
        \foreach \dest in {1,...,3}
            \path (H1-\source) edge (H2-\dest);

    \foreach \source in {1,...,3}
         \path (H2-\source) edge (O1);

    \foreach \source in {1,...,3}
         \path (H2-\source) edge (O2);

    \node[annot,above of=H1-1, node distance=1cm] (hl1) {Hidden layer};
    \node[annot,above of=H2-1, node distance=1cm] (hl2) {Hidden layer};
    \node[annot,left of=hl1] {Input layer};
    \node[annot,right of=hl2] {Output layer};

    \node[annot, right of=H1-1, node distance=0.0cm] (hl1) {\small $n_{2,1}$};
    \node[annot, right of=H1-2, node distance=0.0cm] (hl1) {\small $n_{2,2}$};
    \node[annot, right of=H1-3, node distance=0.0cm] (hl1) {\small $n_{2,3}$};
    \node[annot, right of=H2-1, node distance=0.0cm] (hl1) {\small $n_{3,1}$};
    \node[annot, right of=H2-2, node distance=0.0cm] (hl1) {\small $n_{3,2}$};
    \node[annot, right of=H2-3, node distance=0.0cm] (hl1) {\small $n_{3,3}$};
\end{tikzpicture}
}
  \caption{A simple neural network}
  \label{fig:nn}
\end{figure}
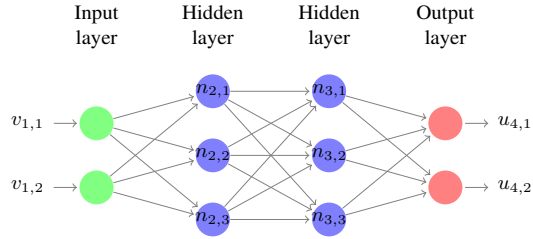

\vspace{-.65cm}
\end{example}
Given one particular input $x$, 
the DNN $\networks$ is
\emph{instantiated} and we use $\networks[x]$ to denote this instance of the
network.  In $\networks[x]$, for each node $n_{k,l}$, the values of the variables $u_{k,l}$ and $v_{k,l}$ are fixed and denoted by $u_{k,l}[x]$ and $v_{k,l}[x]$, respectively. 
%
Therefore, the activation
or deactivation of each ReLU operation in the network is also determined.  
We define
  \begin{equation}
    \label{eq:sign}
    \mathit{sign}_\networks (n_{k,l},x)=
    \begin{cases}
      +1 &\mbox{  if } u_{k,l}[x] = v_{k,l}[x] \\
      -1 & \mbox{  otherwise}
    \end{cases}
  \end{equation}
The subscript $\networks$ will be omitted when clear from the context. 
The classification label of $x$ is denoted by $\networks[x].\mathit{label}$.

\begin{example}\label{example:weights}
Let $\networks$ be a DNN whose architecture is given in Figure \ref{fig:nn}.  
Assume that the weights for the first three layers are given by
$
W_{1}=\scalemath{0.6}{
\begin{bmatrix}
  4 & 0 & -1\\
  1 & -2 & 1
\end{bmatrix}}$
and 
$W_{2}=\scalemath{0.6}{
\begin{bmatrix}
  2 & 3 & -1\\
  -7 & 6 & 4 \\
  1 & -5 & 9
\end{bmatrix}}
$
and that all biases are 0. When given an input 
$x=[0, 1]$, we get $sign(n_{2,1},x)=+1$, since 
$u_{2,1}[x]=v_{2,1}[x]=1$, and $sign(n_{2,2},x)=-1$, 
since $u_{2,2}[x] = -2 \neq 0 = v_{2,2}[x]$. 
\end{example}

We remark that, for simplicity of discussion, the definition focuses on
DNNs with fully connected layers. However, as shown in our experiments, 
our method can also be applied to other popular
DNN structures, such as convolutional and maxpooling layers, and sigmoid activation functions
used in the state-of-the-art DNNs.

%
%
%

\section{Adequacy Criteria for Testing DNNs}
\label{sec:criteria}

\subsection{Test Coverage and MC/DC}


%

%
A test adequacy criterion, or a test coverage metric, is used to quantify
the degree of adequacy to which the software is tested by a test suite with respect to a set
of test conditions. Throughout this paper, we use "criterion" and "metric" interchangeably.

Our criteria for DNNs are inspired by established practices in
software testing, in particular MC/DC test criterion\cite{HVCR2001}, but are
designed for the specific attributes of DNNs.
%
MC/DC is a method of measuring the extent to which safety-critical software has been adequately tested.
At its
core is the idea that if a choice can be made, all the possible factors
(conditions) that contribute to that choice (decision) must be tested.  For
traditional software, both conditions and the decision are usually Boolean
variables or Boolean expressions. 
\begin{example}\label{example:mcdc}
The decision
\begin{equation}\label{eq:mcdc}
d:\,\, ((a > 3) \lor (b = 0) ) \land (c \not= 4)
\end{equation}
contains the conditions: $(a>3)$, $(b=0)$ and $(c\not=4)$.  The
following four test cases provide 100\% MC/DC coverage:
\begin{enumerate}
\item $(a>3)$=\false, $(b=0)$=\true,  $(c \not= 4)$=\false
\item $(a>3)$=\true,  $(b=0)$=\false, $(c \not= 4)$=\true
\item $(a>3)$=\false, $(b=0)$=\false, $(c \not= 4)$=\true
\item $(a>3)$=\false, $(b=0)$=\true,  $(c \not= 4)$=\true
\end{enumerate}
The first two test cases already satisfy both  \emph{condition coverage}
(i.e., all possibilities of the conditions  are exploited) and 
\emph{decision coverage} (i.e., all possibilities of the decision  are
exploited).  The other two cases are needed because, for MC/DC, each
condition should evaluate to \true and \false at least once, and should
independently affect the decision outcome (e.g., the effect of the first condition can be seen by comparing cases 2 and 3).
\end{example}

\subsection{Decisions and Conditions in DNNs}

Our instantiation of the concepts ``decision'' and ``condition'' for DNNs is
inspired by the similarity between Equation~(\ref{eq:sum}) and
Equation~(\ref{eq:mcdc}) and the unique properties of DNNs.  
The information represented by nodes
in the next layer can be seen as a {summary}
(implemented by the layer function, the weights and the bias) of the
information in the current layer.
For example, it has been claimed that nodes in a deeper layer represent more complex
attributes of the input~\cite{YCNFL2015,olah2018the}.

We let $\setoffeatures_k \subseteq \mathcal{P}(L_k)$ be a set of subsets of nodes at layer $k$. 
Without loss of generality, each element of $\setoffeatures_k$, i.e., a subset of nodes in $L_k$, represents a \emph{feature} learned at layer $k$.
Therefore, 
\emph{the core idea of our criteria is to
ensure that not only the presence of a feature needs to be tested but also
the effects of less complex features on a more complex feature must be
tested}.
%
%
 We use $t_k = |\setoffeatures_k|$ to denote the number of {features} in $\setoffeatures_k$ and $\feature_{k,l}$ for $1\leq l\leq t_k$  the $l$-th {feature}. It is noted that the {features} can be  overlapping, i.e., $\feature_{k,l_1}\cap \feature_{k,l_2} \neq \emptyset$. 
%
We consider every {feature} $\feature_{k,l}$ for $2\leq k\leq K$
and $1\leq l\leq t_k$ a \emph{decision} and say that its \emph{conditions}
are those {features} connected to it in the layer $k-1$, i.e., 
$\{ \feature_{k-1,l'}~|~1\leq l' \leq t_{k-1}\}$. 

The use of {feature} generalises the basic building block in the DNN from a single node to a set of nodes.
A single node can be represented as a singleton set. In practice, the {feature} can be supported by the tensor implementation in popular machine learning libraries~\cite{tensorflow} and various feature extraction methods such as SIFT \cite{SIFT}, SURF \cite{SURF}, etc. 
To work with features, we extend the notations $u_{k,l}[x]$ and $v_{k,l}[x]$ for a node $n_{k,l}$ to a feature $\feature_{k,l}$ and write $\feature_{k,l}[x]$ and $\phi_{k,l}[x]$  for the  vectors before and after ReLU, respectively. 

\begin{definition}
\label{def:neuron-pair}
A feature pair $(\feature_{k,i}, \feature_{k+1,j})$ are two features in adjacent layers $k$
and $k+1$ such that $1\leq k< K$,  $1\leq i\leq t_k$ and $1\leq j\leq
t_{k+1}$.  Given a DNN $\networks$, we write $\neuronpairs(\networks)$ (or, simply $\neuronpairs$)
for the set of its feature pairs. We may also call $(\feature_{k,i}, \feature_{k+1,j})$ a neuron pair when both $\feature_{k,i}$ and $\feature_{k+1,j}$ are singleton sets.
\end{definition}

Our new criteria are defined by capturing different ways of instantiating the
changes of the conditions and the decision.  Unlike Boolean variables or
expressions, where it is trivial to define change, i.e., \true $\to$ \false
or \false $\to$ \true, in DNNs there are many different ways of
defining that a decision is \emph{affected} by the changes of the
conditions.  Before giving definitions for ``affected'' in
Section~\ref{sec:covering}, we start by clarifying when a feature ``changes''.



First, the change observed on a 
feature
can be
either a sign change or a value change.

\begin{definition}[Sign Change]\label{def:sc}
Given a feature $\feature_{k,l}$ and two test cases $x_1$ and $x_2$, 
the
sign change of $\feature_{k,l}$ is exploited by $x_1$ and $x_2$, denoted by
$sc(\feature_{k,l},x_1,x_2), iff$
\begin{itemize}
    \item $\mathit{sign}(n_{k,j},x_1) \neq
\mathit{sign}(n_{k,j},x_2)$ for all $n_{k,j}\in\feature_{k,l}$.
\end{itemize}  
Moreover, we~write $ nsc(\feature_{k,l},x_1,x_2)$ if
\begin{itemize}
    \item $\mathit{sign}(n_{k,j},x_1) =
\mathit{sign}(n_{k,j},x_2)$ for all $n_{k,j}\in\feature_{k,l}$.
\end{itemize}
\end{definition}

Note that $nsc(\feature_{k,l},x_1,x_2) \neq \neg sc(\feature_{k,l},x_1,x_2)$. 
Before preceeding to another kind of change called \emph{value change}, we need notation for value function. A value function is denoted by $g: \setoffeatures_k \times D_{L_1}\times D_{L_1}\rightarrow \{\true,\false\}$. Simply speaking, it  expresses the DNN developer's intuition (or knowledge) about what constitutes a significant change on the feature $\feature_{k,l}$, by specifying the difference between two vectors $\feature_{k,l}[x_1]$ and $\feature_{k,l}[x_2]$.  
We do not impose particular restrictions on the form of a value function, except that for practical reasons,  it needs to be evaluated efficiently. Here, we give a few examples. 

\begin{example}
For a singleton set $\feature_{k,l}=\{n_{k,j}\}$, the  function  $\valuefunction(\feature_{k,l}, x_1, x_2)$ can express 
$|u_{k,j}[x_1] - u_{k,j}[x_2]| \geq d$ (absolute change) or
$\frac{u_{k,j}[x_1]}{u_{k,j}[x_2]} > d \lor
\frac{u_{k,j}[x_1]}{u_{k,j}[x_2]} < 1/d$ (relative change).
It~can also express the constraint on one of the values $u_{k,j}[x_2]$, such
as $u_{k,j}[x_2] > d $ (upper boundary).
\end{example}
\begin{example}
For the general case, the  function $\valuefunction(\feature_{k,l}, x_1, x_2) $ can express the distance between two vectors  $\feature_{k,l}[x_1]$ and $\feature_{k,l}[x_2]$ by norm-based distances $\distance{\feature_{k,l}[x_1] -
\feature_{k,l}[x_2]}{p}  \leq d$ for a real number $d$ and a distance measure
$L^p$, or structural similarity distances such as SSIM \cite{WSB2003}. It can also express
constraints between nodes of the same layer, such as 
$\bigwedge_{j\neq i}v_{k,i}[x_1] \geq v_{k,j}[x_1]$. 
\end{example}

In general, the distance measure $L^p$ could be $L^1$ (Manhattan distance),
$L^2$ (Euclidean distance), $L^\infty$ (Chebyshev distance) and so on. 
We remark that there is no consensus on which norm is the best to use and,
furthermore, this is likely problem-specific.  Finally, we define value change as follows.

\begin{definition}[Value Change]\label{def:vc}
Given a feature $\feature_{k,l}$, two test cases $x_1$ and $x_2$, and a value function $\valuefunction$,  
the value change of $\feature_{k,l}$ is exploited by $x_1$ and $x_2$ w.r.t. 
$\valuefunction$, denoted by
$vc(\valuefunction,\feature_{k,l},x_1,x_2)$, if 
\begin{itemize}
    \item $\valuefunction(\feature_{k,l}, x_1,x_2)$=\true.
\end{itemize}
Moreover, we write $\neg
vc(\valuefunction,\feature_{k,l},x_1,x_2)$ when the condition is not satisfied.
\end{definition}


%
%
%
%
%
%

\subsection{Covering Methods}
\label{sec:covering}

In this section, we present a family of four methods to cover the causal changes
in a DNN that were just defined.

\begin{definition}[Sign-Sign Coverage, or SS Coverage]
  \label{def:ssc}
  A feature pair $\neuronpair=(\feature_{k,i},\feature_{k+1,j})$ is SS-covered by two test cases
  $x_1, x_2$, denoted by $\covered{SS}{}{(\neuronpair,x_1,x_2)}$, if the following
  conditions are satisfied by the  DNN instances $\networks[x_1]$ and $\networks[x_2]$: 
  \begin{itemize}
    \item $sc(\feature_{k,i},x_1,x_2)$ and  $nsc(P_k\setminus \feature_{k,i},x_1,x_2)$;
    \item $sc(\feature_{k+1,j},x_1,x_2)$.
  \end{itemize}
  where $P_k$ is the set of nodes in layer $k$.  
\end{definition}

SS coverage provides 
evidence that the sign change of a condition
feature $\feature_{k,i}$
independently affects the sign of the
decision feature $\feature_{k+1,j}$ of the next layer.  Intuitively, the first
condition says that the sign change of feature $\feature_{k,i}$ is exploited using
$x_1$ and $x_2$, without changing the signs of other non-overlapping features.  The second says that 
the sign change of feature $\feature_{k+1,j}$ is exploited using $x_1$
and $x_2$.

\begin{example}(Continuation of Example~\ref{example:weights})\label{example:SSc}
Given 
inputs $x_1=(0.1,0)$ and $x_2=(0,-1)$, we  compute the
activation values for each node as given in Table~\ref{tab:value-table}. 
Therefore, we have $sc(\{n_{2,1}\},x_1,x_2)$, $nsc(\{n_{2,2}\},x_1,x_2)$, $nsc(\{n_{2,3}\},x_1,x_2)$ and $sc(\{n_{3,1}\},x_1,x_2)$.  By
Definition~\ref{def:ssc}, the feature pair $(\{n_{2,1}\}, \{n_{3,1}\})$ is SS-covered
by $x_1$ and $x_2$.
\end{example}

\begin{table}
  \begin{center}
   \scriptsize
   \scalebox{1.0}{
    \begin{tabular}{|l|c|c|c|c|c|c|c|} \hline
        input
                  & $n_{2,1}$ & $ n_{2,2} $ & $ n_{2,3}$ & $ n_{3,1}$ & $n_{3,2}$ & $n_{3,3}$  \\\hline
      $(0.1,0)$   & $0.4$     & $0$ & $-0.1$ (0) & $0.8$ & $1.2$ & $-0.4$ (0) \\
      $(0,-1)$    & $-1(0)$   & $2$ & $-1$ (0) & $-14$ (0) & $12$ & $8$ \\\hline
      sign ch.    & sc        & $\neg$sc & $\neg$sc & sc & $\neg$sc & sc \\\hline\hline
      
      $(0,1)$     & $1$ & $-2$ (0) & $1$ & $3$ & $-2$ (0) & $8$ \\
      $(0.1,0.1)$ & $0.5$ & $-0.2$ (0) & 0 & $1$ & $1.5$ & $-0.5$ (0) \\\hline
      sign ch.    & $\neg$sc & $\neg$sc & $\neg$sc & $\neg$sc & sc & sc \\\hline\hline
      
      $(0,-1)$    & $-1$ (0) & $2$      & $-1$ (0)   & $-14$ (0)  & $12$  & $8$ \\
      $(0.1,-0.1)$& $0.3$    & $0.2$    & $-0.2$ (0) & $-0.8$ (0) & $2.1$ & $0.5$ \\\hline
      sign ch.    & sc       & $\neg$sc & $\neg$sc   & $\neg$sc   & $\neg$sc & sc \\\hline\hline
      
      $(0,1)$     & $1$      & $-2$ (0) & $1$   & $3$   & $-2$ (0) & $8$ \\
      $(0.1,0.5)$ & $0.9$    & $-1$ (0) & $0.4$ & $2.2$ & $0.7$ & $2.7$ \\\hline
      sign ch.    & $\neg$sc & $\neg$sc & $\neg$sc & $\neg$sc & sc & $\neg$sc \\\hline\hline
    \end{tabular}
    }
  \end{center}
  \caption{Activation values and sign changes for the input examples in Examples~\ref{example:SSc}, \ref{example:DS}, \ref{example:SV},  \ref{example:DV}. An entry can be of the form $v$, in which $v\geq 0$, or $u(v)$, in which $u < 0$, $v = 0$, or $sc$, denoting that the sign has been changed, or $\neg sc$, denoting that there is no sign change.}
  \label{tab:value-table}
\end{table}
%

SS coverage is 
close to MC/DC: instead of observing the change of a
Boolean variable (i.e., \true $\to$ \false or \false $\to$ \true), we
observe a sign change of a feature.
However, the behavior of a DNN has additional complexity
that is not necessarily captured by 
a~direct adoption of the MC/DC-style coverage to a DNN. 
\textbf{Subsequently, three additional coverage criteria are designed
to complement SS~coverage.}

First, the sign of $\feature_{k+1,j}$ can be altered between two test cases, 
even when none of the nodes $n_{k,i}$ in layer $k$ changes its sign. Note that $P_k$, the set of all nodes in layer $k$, is also a feature
and thus we write $nsc(P_k,x_1,x_2)$ to express that no sign change occurs for any of the nodes in layer $k$. 

\begin{definition}[Value-Sign Coverage, or VS Coverage]
  \label{def:dsc}
  Given a value function $\valuefunction$, a feature pair
  $\neuronpair=(\feature_{k,i},\feature_{k+1,j})$ is VS-covered by two test cases 
  $x_1, x_2$, denoted by $\covered{VS}{\valuefunction}{(\neuronpair,x_1,x_2)}$,
  if the following conditions are satisfied by the DNN instances $\networks[x_1]$
  and $\networks[x_2]$: 
  \begin{itemize}
    \item $vc(\valuefunction,\feature_{k,i},x_1,x_2)$ and $nsc(P_{k},x_1,x_2)$;
    \item $sc(\feature_{k+1,j},x_1,x_2)$.
  \end{itemize}
\end{definition}

Intuitively, the first condition describes the value change of nodes in
layer $k$ and the 
second 
requests the sign change of the feature
$\feature_{k+1,j}$.  Note that, in addition to $vc(\valuefunction,\feature_{k,i},x_1,x_2)$, 
we need $nsc(L_k,x_1,x_2)$, 
which asks for no sign changes
for any node at layer $k$. 
This is to ensure that the overall
change to the activations in layer $k$ is relatively small. 
%


\begin{example} (Continuation of Example~\ref{example:weights})\label{example:DS}
Given two inputs $x_1=(0,1)$ and $x_2=(0.1,0.1)$, by the computed activation
values in Table~\ref{tab:value-table},  we have $sc(\{n_{3,3}\},x_1,x_2)$ and
all nodes in layer 2 do not change their activation signs, i.e., $ nsc(\{n_{2,1},n_{2,2},n_{2,3}\},x_1,x_2)$.
%
Thus, by Definition~\ref{def:dsc}, $x_1$ and $x_2$ (subject to certain
value function ~$g$) can be used to VS-cover the feature pair
e.g., $(\{n_{2,1},n_{2,2}\},\{n_{3,3}\})$.
%
\end{example}






Until now, we have seen the sign change of a decision feature $\feature_{k+1,j}$ as
the equivalent of the change of a decision in MC/DC.  This view may still be
limited.  For DNNs, a key safety problem~\cite{szegedy2014intriguing} related to their
high non-linearity is that an insignificant (or imperceptible) change to the
input (e.g., an image) may lead to a significant change to the output (e.g.,
its label).  We expect that our criteria can guide test case
generation algorithms towards unsafe cases, by working with two
adjacent layers that are finer than the input-output relation.  We notice
that the label change in the output layer is the direct result of the
changes to the activation values in the penultimate layer.  Therefore, in
addition to the sign change, the change of the value of the decision feature
$\feature_{k+1,j}$ is also important.

\begin{definition}[Sign-Value Coverage, or SV Coverage]
\label{def:svc}
Given a value function $\valuefunction$, a feature pair
$\neuronpair=(\feature_{k,i},\feature_{k+1,j})$ is SV-covered by two test cases $x_1,
x_2$, denoted by $\covered{SV}{\valuefunction}{(\neuronpair,x_1,x_2)}$, if
the following conditions are satisfied by the DNN instances
$\networks[x_1]$ and $\networks[x_2]$:
  \begin{itemize}
    \item $sc(\feature_{k,i},x_1,x_2)$ and  $nsc(P_k \setminus \feature_{k,i},x_1,x_2)$;
    \item $vc(\valuefunction,\feature_{k+1,j},x_1,x_2)$ and $nsc(\feature_{k+1,j},x_1,x_2)$.
  \end{itemize}
\end{definition}

The first  condition is the same as that in
Definition~\ref{def:ssc}.  The difference is in the second condition, which now considers the feature value change $vc(\valuefunction,\feature_{k+1,j},x_1,x_2)$ with respect to a value function $\valuefunction$, by independently
modifying one its condition features' sign. Intuitively, SV Coverage
captures the significant change of a decision feature's value that complements the sign change case.

\begin{example} (Continuation of Example~\ref{example:weights})\label{example:SV}
Consider the feature pair $(\{n_{2,1}\}, \{n_{3,2}\})$. Given two inputs $x_1=(0,-1)$
and $x_2=(0.1,-0.1)$, by the computed activation values in
Table~\ref{tab:value-table}, we have $sc(\{n_{2,1}\},x_1,x_2)$ and $
nsc(\{n_{2,2},n_{2,3}\},x_1,x_2)$.
%
If, according to the function~$\valuefunction$,
$\frac{u_{3,2}[x_1]}{u_{3,2}[x_2]}\approx 5.71$ is a significant change, i.e., 
$g(u_{3,2}[x_1],u_{3,2}$ $[x_2])$=\true, 
then the pair $(\{n_{2,1}\}, \{n_{3,2}\})$ is SV-covered by $x_1$ and $x_2$.
\end{example}

Finally, we have the following definition by replacing the sign change of
the decision in Definition~\ref{def:dsc} with 
value change.

\begin{definition}[Value-Value Coverage, or VV Coverage]
  \label{def:dvc}
 Given two value functions $\valuefunction_1$ and $\valuefunction_2$, a feature pair $\neuronpair=(\feature_{k,i},\feature_{k+1,j})$ is VV-covered by two test cases $x_1, x_2$, denoted by $\covered{VV}{\valuefunction_1,\valuefunction_2}{(\neuronpair,x_1,x_2)}$, if the following conditions are satisfied by the  DNN instances $\networks[x_1]$ and $\networks[x_2]$: 
  \begin{itemize}
    \item $vc(\valuefunction_1, \feature_{k,i},x_1,x_2)$ and $nsc(P_k,x_1,x_2)$;
    \item $vc(\valuefunction_2,\feature_{k+1,j},x_1,x_2)$ and $nsc(\feature_{k+1,j},x_1,x_2)$.
  \end{itemize}
\end{definition}

Intuitively, VV coverage targets
scenarios in which there is no sign change for a condition feature, but the decision
feature's value is changed significantly.

\begin{example} (Continuation of Example~\ref{example:weights})\label{example:DV}
For any $i\in\{1,\ldots,3\}$, the feature pair $(\{\feature_{2,i}\},\{\feature_{3,3}\})$ 
are VV-covered by the inputs $x_1=(0,1)$ and
$x_2=(0.1,0.5)$, subject to the value functions $\valuefunction_1$ and
$\valuefunction_2$. As shown in Table~\ref{tab:value-table},
$\frac{u_{3,3}[x_1]}{u_{3,3}[x_2]}\approx 2.96$,
for all $i\in \{1,\ldots,3\}: nsc(\{n_{2,i}\},x_1,x_2)$ and $nsc(\{n_{3,3}\},x_1,x_2)$.
\end{example}

\subsection{Test Conditions, Test Suites and Test Criteria}

By utilising  the covering methods defined in Section~\ref{sec:covering},
we now are able to instantiate the test conditions, test suites and test criteria for DNNs.
Let
$F=\{\covered{SS}{}{},\covered{VS}{\valuefunction}{},\covered{SV}{\valuefunction}{},\covered{VV}{\valuefunction_1,\valuefunction_2}{}\}$
be a set of covering methods. Given a DNN $\networks$ and a  covering method $f\in F$, a test condition set  is characterised by the pair 
$(f,\neuronpairs)$ that asks for the coverage of corresponding causal changes on feature pairs in $\neuronpairs$ according to $f$.

 
 


Given a DNN $\networks$, a test suite $\testsuites$ is a finite set of inputs, i.e., $\testsuites\subseteq D_{L_1}$. 
Ideally, 
we run a test case generation algorithm to find a test suite $\testsuites$ such that %
  \begin{equation}
  \forall \neuronpair\in \neuronpairs ~\exists~ x_1,x_2 \in \testsuites: f(\neuronpair,x_1,x_2)
 \end{equation} 
In practice, we might want to
compute the degree to which the test conditions are satisfied by a generated test suite $\testsuites$.


\begin{definition}[Test Criterion]
Given a DNN $\networks$, a test condition set by $(f,\neuronpairs)$ and a test suite $\testsuites$, 
the test criterion $\metric_{f}(\networks,\testsuites)$ is defined as follows: 
\begin{equation}
  \label{eq:madc}
  \metric_f(\networks,\testsuites)=\frac{|\{\neuronpair \in \neuronpairs | \exists x_1,x_2 \in \testsuites: f(\neuronpair,x_1,x_2)\}|}{|\neuronpairs|}
\end{equation}
\end{definition}
 
That is, it computes the percentage of the feature pairs that are
covered by test cases in $\testsuites$ 
with respect to 
the covering
method~$f$.


\commentout{

\begin{table}
  \begin{center}
    \begin{tabular}{|l|l|l|} 
    \hline
                     &   sign change   &   value change  \\
                     &    (Def.~\ref{def:sc})  &    (Def.~\ref{def:vc}) \\ 
                     \hline
      sign change (Def.~\ref{def:sc})          &   $M_{\covered{SS}{}{}}(O,\testsuites)$  &   $M_{\covered{VS}{\valuefunction}{}}(O,\testsuites)$           \\\hline
      value change (Def.~\ref{def:vc}) &   $M_{\covered{SV}{\valuefunction}{}}(O,\testsuites)$  &  $M_{\covered{VV}{\valuefunction_1,\valuefunction_2}{}}(O,\testsuites)$      \\\hline
    \end{tabular}
    \caption{A set of adequacy criteria for testing the DNNs, inspired by the MC/DC structural
    coverage criteria. The columns are changes to the conditions (i.e., the activations of features
    in layer $k$), and the rows are changes to the decision (i.e., the activation of a feature in 
    layer $k+1$)}
    \label{tab:criteria}
  \end{center}
\end{table}

}

Finally, instantiating $f$ with covering methods in $F$, we obtain four test
criteria $M_{\covered{SS}{}{}}(\networks,\testsuites)$,
$M_{\covered{VS}{\valuefunction}{}}(\networks,\testsuites)$,
$M_{\covered{SV}{\valuefunction}{}}(\networks,\testsuites)$ and
$M_{\covered{VV}{\valuefunction_1,\valuefunction_2}{}}$ $(\networks,\testsuites)$. 


\commentout{
\section{Generalised Condition-Decision Neuron Pairs}

\textbf{No. We do not need this section. I guess the easiest way to modify the paper 
is to redefine sign change in Definition \ref{def:sc} from a single
neuron to a set of neurons}

The core idea of our proposed criteria is to capture condition changes that cause the
change of the decision neuron. In particular, we define sign change for each condition
neuron. However, we must realise that for large DNN models with millions of neurons, 
a single condition sign change by itself may be not able to affect the decision neuron to the
extent of sign change or value change. Thus, 
}

\section{Comparison with Existing Structural Test Criteria}\label{sec:comparison}

So far, there have been a few proposals for structural test coverage criteria for DNNs.
In this part, we compare our criteria with them, 
including the safety coverage ($M_{\covered{S}{}{}}$) \cite{WHK2018}, 
neuron coverage ($M_{\covered{N}{}{}}$) \cite{PCYJ2017} and several of its extensions 
in \cite{ma2018deepgauge} such as neuron boundary coverage ($M_{\covered{NB}{}{}}$), 
multisection neuron coverage ($M_{\covered{MN}{}{}}$) and top  neuron coverage ($M_{\covered{TN}{}{}}$).
While \cite{PCYJ2017} and \cite{WHK2018} have been authored slightly ahead of ours, our criteria have been developed in parallel with \cite{ma2018deepgauge}. 


A metric $M_1$ is said to be weaker than another metric $M_2$, denoted by $M_1\preceq M_2$, iff for any given test suite $\testsuites$ on 
$\networks$, we have $M_1(\networks,\testsuites)<1 $ implies  $ M_2(\networks,\testsuites)<1$. 
%
For instance, as shown in Example~\ref{example:mcdc}, decision coverage and condition coverage are weaker than MC/DC, since MC/DC cannot be covered before all decisions and conditions are covered.

The introduction of the feature relation in this work is very powerful:
1) the criteria in this paper are stronger than those in \cite{PCYJ2017} and \cite{ma2018deepgauge}, which only consider individual neurons' activation statuses,
and 2) it is non-trivial for the safety coverage in \cite{WHK2018}, which is comparable to the traditional path coverage that asks to cover every
program execution path, to cover all test conditions of our criteria.

%

In the following, we uniformly formalise the criteria in \cite{PCYJ2017,WHK2018,ma2018deepgauge} based on notations in this paper and
we will define $M_f(\networks,\testsuites)$ for $f\in \{\covered{N}{}{},\covered{S}{}{},\covered{NB}{}{},\covered{MN}{}{},\covered{TN}{}{}\}$.

\begin{definition} [Neuron Coverage] 
\label{def:neuron-coverage}
A node $n_{k,i}$ is neuron covered by a test case $x$, denoted by $\covered{N}{}(n_{k,i},x)$, if $\mathit{sign}(n_{k,i},x)= +1$. 
\end{definition}

Given the definition, the neuron coverage asks that each neuron $n_{k,i}$ must be activated at least once by some test input $x$: $\mathit{sign}(n_{k,i},x)= +1$.

The neuron coverage was later generalised in \cite{ma2018deepgauge} to cover more fine-grained neuron activation statuses, including the boundary value for
a neuron's activation.
For simplicity, we only consider upper bounds when working with neuron boundary coverage. Given a node $n_{k,i}$ and a training dataset $X$, we let $v_{k,i}^u=\max_{x\in X}v_{k,i}[x]$ be its  maximum value  over the inputs in $X$. 

\begin{definition}  [Neuron Boundary Coverage] 
A node $n_{k,i}$ is neuron boundary covered by a test case $x$, denoted by $\covered{NB}{}(n_{k,i},x)$, if $v_{k,i}[x] > v_{k,i}^u$. 
\end{definition}

Let $rank(n_{k,i},x)$ be the rank of $v_{k,i}[x]$ among those values of the nodes at the same layer, i.e.,  $\{v_{k,j}[x]~|~1\leq j\leq s_k\}$. 
\begin{definition}  [Top Neuron Coverage] 
For $1\leq m\leq s_k$, a node $n_{k,i}$ is top-$m$ neuron covered by  $x$, denoted by $\covered{TN}{m}(n_{k,i},x)$, if $rank(n_{k,i},x) \leq m$. 
\end{definition}

Let $v_{k,i}^l=\min_{x\in X}v_{k,i}[x]$. We can split the interval $I_{k,i} = [v_{k,i}^l,v_{k,i}^u]$ into $m$ equal sections and let $I_{k,i}^j$ be the $j$th section. 

\begin{definition}  [Multisection Neuron Coverage] 
Given $m\geq 1$, a node $n_{k,i}$ is $m$-multisection neuron covered by  a test suite $\testsuites$, denoted by $\covered{MN}{m}(n_{k,i},\testsuites)$, if $\forall 1\leq j\leq m\exists x\in \testsuites: v_{k,i}[x] \in I_{k,i}^j$, i.e., all sections are covered by some test cases. 
\end{definition}

Given $f\in \{\covered{N}{},\covered{NB}{},\covered{TN}{m}\}$ and   the set ${\cal H}(\networks)$ of hidden nodes in  $\networks$, their associated test criterion can
be then defined as follows
\begin{equation}
M_f(\networks, \testsuites) = \frac{|\{n \in {\cal H}(\networks)~|~\exists x\in\testsuites: f(n,x)\}|}{|{\cal H}(\networks)|}
\end{equation}
$M_{\covered{MN}{m}{}}(\networks,\testsuites)$ can be obtained by a simple adaptation. 


We can fnd out that the criteria in \cite{PCYJ2017,WHK2018} are special cases of our criteria
(with a suitable value function $\valuefunction$).
As an example, the ``weaker than'' relationship between neuron coverage and SS coverage is proved in the
lemma below.

\begin{lemma}\label{lemma:nss}
$M_{\covered{N}{}{}} \preceq M_{\covered{SS}{}{}}$.
\end{lemma}
\begin{proof}
Note that, for every hidden  node $n_{k,j}\in {\cal H}(\networks)$, there exists a feature pair $ (\{n_{k-1,i}\},\{n_{k,j}\})\in \neuronpairs(\networks)$ for any $1\leq i\leq s_{k-1}$. Then, by Definition~\ref{def:ssc}, we have $sc(\{n_{k,j}\},x_1,x_2)$, which by Definition~\ref{def:sc} means that $\mathit{sign}(n_{k,j},x_1) \neq
\mathit{sign}(n_{k,j},x_2)$. That is, either $\mathit{sign}(n_{k,j},x_1)=+1$ or $\mathit{sign}(n_{k,j},x_2)=+1$. Therefore, if $n_{k,j}$ is not covered in a test suite $\testsuites_1$ for neuron coverage, none of the pairs $ (\{n_{k-1,i}\},\{n_{k,j}\})$ for $1\leq i\leq s_{k-1}$ is covered in a test suite $\testsuites_2$ for SS coverage.
\end{proof}

Figure \ref{fig:relationship} gives a diagrammatic summary of the  relations between all existing structural test coverage 
criteria for DNNs. 
The arrows represent the ``weaker than'' relations. 
The complete proofs are in the appendix. 
As shown in Figure \ref{fig:relationship}, our criteria require more test cases  to be generated
than those in \cite{PCYJ2017,ma2018deepgauge}, and therefore can lead to more intensive testing. 
%

\begin{figure}[!htb]
    \centering
    \includegraphics[scale=0.35]{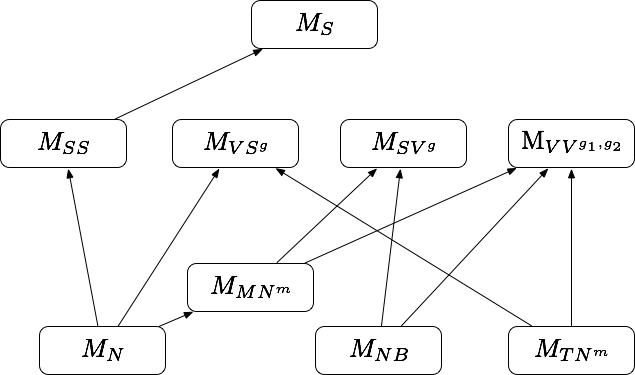}
    \caption{Relationship between DNN structural test criteria}
    \label{fig:relationship}
\end{figure}

On the other hand, as indicated in Figure \ref{fig:relationship},
 SS coverage is weaker than safety coverage~\cite{WHK2018}. 
In \cite{WHK2018},
the input space is discretised with
a set of hyper-rectangles, and then one test case is generated for each
hyper-rectangle. Such a scheme is computationally intractable due to the high-dimensionality of DNNs.
The testing approach in this paper is more practical. 
%
 
\begin{definition} [Safety Coverage]
Let each hyper-rectangle $rec$ contain those inputs with the same pattern
of ReLU, i.e., for all $x_1,x_2\in rec$ we have
$\mathit{sign}(n_{k,l},x_1)=\mathit{sign}(n_{k,l},x_2)$ for all $n_{k,l}\in
{\cal H}(\networks)$.  A~hyper-rectangle $rec$ is covered by a test
case $x$, denoted by $\covered{S}{}(rec,x)$, if $x\in rec$.
\end{definition}

Let $Rec(\networks)$ be the set of hyper-rectangles. Then 
\begin{equation}M_{\covered{S}{}{}}(\networks, \testsuites) = \frac{|\{rec \in Rec(\networks)~|~\exists x\in \testsuites: \covered{S}{}(rec,x)\}|}{|Rec(\networks)|}
\end{equation}
%



\section{Automated Test Case Generation}
\label{sec:test-gen}

\newcommand{\constraints}{\mathcal{C}}

We conjecture that the criteria proposed above achieve a good balance
between their ability to guide test case generation towards relevant cases
and computational cost. To show this hypothesis, we now apply our criteria with
two different test case generation approaches for DNNs.

The test conditions required by our criteria exhibit particular combinations
between the condition feature and the decision feature, and it is not
trivial to generate test cases for them.  Due to the lack of awareness of
the feature relation, testing methods in \cite{PCYJ2017,WHK2018,ma2018deepgauge}
cannot be directly used to generate tests for our criteria. Also, as 
pointed out in~\cite{odena2018tensorfuzz}, 
random test case generation is prohibitively inefficient for DNNs.  
Meanwhile, the symbolic encoding in the concolic testing method in \cite{sun2018concolic}
is expressive enough to encode test conditions defined by our criteria and 
is suitable for small to medium-sized DNNs.
Furthermore, in this section, we also present a new test case generation algorithm based on gradient descent (GD) 
search, which scales to large DNNs.
%
%

\subsection{Test Oracle}\label{sec:safety}

An oracle in software testing is a mechanism to detemine whether a test has passed or failed.
The DNN $\networks$ represents a function $\hat{\mathcal{F}}(x)$, which
approximates $\mathcal{F}(x): D_{L_1} \to \labels$ that models perfect human
perception capability.  Therefore, the ultimate safety requirement is that
for all test cases $x\in D_{L_1}$, we have $\hat{\mathcal{F}}(x) =
\mathcal{F}(x)$.  However, such a requirement is not practical because of the
large number of inputs in $D_{L_1}$ and the high cost of asking humans to
label images.
A pragmatic compromise, as done in many other works
including~\cite{szegedy2014intriguing,HKWW2017}, is to use the following oracle as a proxy.

\begin{definition}[Oracle]
\label{def:requirement}
Given a finite set $X$ of correctly labeled inputs, an input $x'$ passes the
oracle if there exists some $x\in X$ such that $x$ and $x'$ are close enough
and $\hat{\mathcal{F}}(x') =
\hat{\mathcal{F}}(x)$.
\end{definition}

Ideally, the question of whether two inputs $x$ and $x'$ are close enough is to be
answered according to the human perception.  In practice, this is approximated by
various approaches, including norm-based distance measures.
%
%
Specifically, given the norm $L^p$ and an upper bound $b$ for the distance,
we say that two inputs $x$ and $x'$ are close iff $\distance{x-x'}{p} \leq
b$. We write $\mathit{close}(x,x')$ for this relation.
A pair of inputs that satisfies this definition are called
\emph{adversarial examples} if the label assigned to them by the
DNN differs.

The choice of $b$ is problem-specific.  In our experiments, we evaluate the
distribution of adversarial examples with respect to the distance (as
illustrated in Figure~\ref{fig:ss-distance-map} for one of the criteria). 
The use of this oracle focuses on adversarial examples in the DNN.
There may exist other ways to define a test oracle for DNNs, and our criteria
are independent from its particular definition.

%


\subsection{Test Case Generation with LP}
\label{sec:lp}

%

We first adopt the concolic testing approach in \cite{sun2018concolic}
to generate test cases that satisfy the test conditions defined by our criteria.
In \cite{sun2018concolic}, test conditions are symbolically encoded
using an linear programming (LP) model that is solved to obtain new test cases.
Specifically, the LP-based approach fixes a particular pattern of node
activations according to a given input $x$. 

%
%


Though the overall behaviour of a DNN is highly non-linear, due to the use of e.g., the ReLU activation function,
when the DNN is instantiated with a particular input, the activation pattern is fixed, and this corresponds
to an LP model.
\paragraph{LP model of a DNN instance}\label{sec:encoding}

The variables used in the LP model are distinguished in \textbf{bold}.  All
variables are real-valued.  Given an input $x$, the input variable $\mathbf{x}$,
whose value is to be synthesized with LP, is required to have the identical
activation pattern as $x$, i.e., $\forall n_{k,i}: sign(n_{k,i},\mathbf{x})=sign(n_{k,i},x)$.

We use variables $\mathbf{u_{k,i}}$ and $\mathbf{v_{k,i}}$ to denote the values
of a node $n_{k,i}$ before and after the application of ReLU, respectively.
Then, we have the set $\constraints_1[x]$ of constraints to encode
ReLU operations for a network instance, where $\constraints_1[x]$ is given as:
\begin{equation}
\label{eq:lp-dir1}
\scalemath{0.9}{
\begin{array}{l}
 \{\mathbf{u_{k,i}}\geq 0 \wedge \mathbf{v_{k,i}}=\mathbf{u_{k,i}} ~|~ sign(n_{k,i},x)\geq 0, k\in [2,K), i\in [1\dots s_k] \} \\
\cup 
  \{\mathbf{u_{k,i}}< 0 \wedge \mathbf{v_{k,i}}=0~|~sign(n_{k,i},x)<0, k\in [2,K), i\in [1\dots s_k]\}
\end{array}
}
\end{equation}

Note, the activation values $\mathbf{u}_{k,i}$ of each node is
determined by the activation values $\mathbf{v}_{k-1,j}$ of those nodes in
the prior layer. This is defined as in Equation \eqref{eq:sum}. Therefore,
we add the following set of constraints,  $\constraints_2[x]$,
as a symbolic encoding of nodes' activation values.
\begin{equation} \label{eq:lp-v}
\scalemath{0.85}{
\{  \mathbf{u}_{k,i}=\sum_{1\leq j \leq s_{k-1}} \{{w}_{k-1, j, i}\cdot \mathbf{v}_{k-1,j}\} + b_{k,i} ~|~ k\in [2,K), i\in [1\dots s_k]\}
}
\end{equation}
The resulting LP model $\constraints[x] =
\constraints_1[x]\cup \constraints_2[x]$  represents \emph{a symbolic set of
inputs} that have the identical activation pattern as $x$.
Further, we can specify some optimisation objective $obj$ and call an LP solver
to find the optimal $\textbf{x}$ (if one exists). In concolic testing, each time the DNN
is instantiated with a concrete input $x_1$, the corresponding partial activation pattern serves
as the base for the LP modeling, upon which a new test input $x_2$ may be found that satisfies the specified test condition.

\commentout{
\subsection{$M\covered{SS}{}{}$ and $M\covered{SV}{\valuefunction}{}$ with Top Weights}

For the two criteria $M\covered{SS}{}{}$ and $M\covered{SV}{\valuefunction}{}$,  we need to call the function $get\_input\_pair$  
$|\neuronpairs(\networks)|$ times. 
We note that $|\neuronpairs(\networks)|= \sum_{k=2}^{K} s_k \cdot s_{k-1}$. 
Therefore, when the size of the network is large, the generation of a test
suite may take a  long time.  To work around this, we may consider an
alternative definition of $\neuronpairs(\networks)$ by letting
$(n_{k,i},n_{k+1,j})\in \neuronpairs(\networks)$ only when the weight
is one of the $\kappa$ largest among $\{|w_{k,i',j}|~|~i'\in
[1\dots s_{k}]\}$.  The rationale behind is that condition neurons do not
equally affect their decision, and those with higher (absolute) weights are
likely to have a larger influence.

}

\subsection{Test Case Generation: a Heuristic Search }

The LP optimisation in Section \ref{sec:lp} provides a strong
guarantee that is able to return an input pair as long as one exists. However,
its scalability depends on the efficiency of LP solvers, and it is not trivial to apply such
a testing method to large-scale DNNs with millions of neurons. In this part, we instead develop
a heuristic algorithm  based on gradient search. Note that, it has been widley shown that following gradient change is efficient in finding bugs in DNNs and has been utilised in existing 
DNN testing methods (eg., \cite{PCYJ2017,ma2018deepgauge,yaghoubi-hscc}). 
\vspace{-0.25cm}
\begin{algorithm}[!htp]
  \caption{$get\_input\_pair(f, \feature_{k,i},\feature_{k+1,j})$}

  \label{algo:aac-get-input-pair-search}
  \begin{algorithmic}
    \For{each $x_1 \in data\_set$}
      \State sample an input  $x_2$ and a positive number $\epsilon$ 
      \For{a bounded number of steps}
        \If{$f((\feature_{k,i},\feature_{k+1,j}),x_1, x_2)$} \Return{$x_1,x_2$}  \EndIf
        \State update $\epsilon$       
        \If{$\neg f^{widen}((\feature_{k,i},\feature_{k+1,j}),x_1, x_2)$} $x_2\leftarrow x_2-\epsilon\cdot\nabla\hat{\mathcal{F}}(x_2)$
        \Else {$\,\,\,x_2\leftarrow x_2+\epsilon\cdot\nabla\hat{\mathcal{F}}(x_2)$}
        \EndIf 
      \EndFor
    \EndFor
    \State \Return{None, None}
  \end{algorithmic}
\end{algorithm}
\vspace{-0.25cm}
The algorithm, depicted in Algorithm \ref{algo:aac-get-input-pair-search}, is 
used to find an input pair $x_1, x_2$ such that the test condition 
of the covering method, $f$, over the feature pair, $\neuronpair=(\feature_{k,i},\feature_{k+1,j})$,
is satisfied; that is, $f(\neuronpair, x_1, x_2)$ is $\true$.
We use $f^{widen}(\neuronpair, x_1, x_2)$ for a widened version
of the testing condition $f$, such that all its predicates on the features $\feature_{k,i}$
and $\feature_{k+1,j}$ are eliminated. It is supposed that $x_1$ is given, and intuitively starting from  an input, $x_2$, if feature changes other than $\feature_{k,i}$ and 
$\feature_{k+1,j}$ do not meet the requirements of $f$, $x_2$ is moved closer to $x_1$, by following
the gradient descent: $x_2\leftarrow x_2-\epsilon\cdot\nabla\hat{\mathcal{F}}(x_2)$, as an attempt to counteract such changes. This applies to the case when the activation sign changes on other condition features. Otherwise, the change between $x_1$ and $x_2$ can only exploit a subset of 
predicates (in the testing condition) from the given feature pair, and we update $x_2$
following the gradient ascent. The algorithm's gradient change follows an adaptive manner that
comprises of a local search to update $x_2$ at each step, and a simple strategy for 
the overall search direction to move closer or further, with respect to $x_1$. 
In our implementation, we apply the FGSM (Fast Gradient Sign Method) \cite{FGSM}
 to initialise $x_2$ and $\epsilon$, and use a binary search scheme to update 
$\epsilon$ at each step.

As a heuristic, the algorithm works when there exists two inputs $x_1$ and $x_2$ s.t. $x_1$ is from the given ``data\_set'', $x_2$ is an input along the gradient search direction, and $(x_1,x_2)$ satisfies the specified
test condition. 

\definecolor{Gray}{gray}{0.9}
\definecolor{LightCyan}{rgb}{0.88,1,1}
\newcolumntype{g}{>{\columncolor{Gray}}r}

\section{Experiments}
\label{sec:experiments}
We conduct experiments using the well-known MNIST {Handwritten Image Dataset}~\cite{lecun1998gradient},
the CIFAR-10 dataset~\cite{krizhevsky2009learning} on small images
and the ImageNet benchmark~\cite{imagenet} from the large-scale visual recognition challenge.  
For clarity, our experiments are classified into four classes: \emph{\ding{192}~bug finding
\ding{193}~DNN safety statistics \ding{194}~testing efficiency \ding{195}~DNN internal structure
analysis}, and results will be labeled correspondingly. We also explain the relation
between our criteria and the existing ones.

In our implementation, 
the objective $\min{\distance{x_2\!-\!x_1}{\infty}}$ is used in all LP
calls, to find good adversarial examples with respect to the test
coverage conditions.  Moreover, we use 
$\valuefunction = \frac{u_{k+1,j}[x_2]}{u_{k+1,j}[x_1]}\geq \sigma$
%
%
with $\sigma=2$ for $g$ in $SV^g$ and $\sigma=5$ for
$VV^{\valuefunction_1, \valuefunction_2}$ (with respect to $\valuefunction_2$).  
We admit that such choices are experimental. For generality and to
speed up the experiments, we leave the value function $\valuefunction_1$ unspecified. 
Providing a specific $\valuefunction_1$ may require more effort to find an
$x_2$ (because $\valuefunction_1$ is an additional constraint), but the
resulting $x_2$ can be better.
\begin{figure}[!htb]
    \centering
    \def\arraystretch{1.2}
    \scalebox{0.9}{
    \begin{tabular}{|c|c|rg|rg|rg|rg|rg|} \hline
      & hidden layers & $M_{\covered{SS}{}{}}$ & $AE_{\covered{SS}{}{}}$ &  $M_{\covered{VS}{\valuefunction}{}}$ & $AE_{\covered{VS}{\valuefunction}{}}$ &  $M_{\covered{SV}{g}{}}$ & $AE_{\covered{SV}{g}{}}$ &  $M_{\covered{VV}{\valuefunction_1,\valuefunction_2}{}}$ & $AE_{\covered{VV}{\valuefunction_1, \valuefunction_2}{}}$ \\\hline
      $\networks_1$ & 67x22x63       & 99.7\% & 18.9\% & 100\% & 15.8\% & 100\% &6.7\% & 100\% & 21.1\%  \\\hline
      $\networks_2$ & 59x94x56x45    & 98.5\% &  9.5\% & 100\% & 6.8\% & 99.9\% & 3.7\% & 100\% & 11.2\% \\\hline
      $\networks_3$ & 72x61x70x77    & 99.4\% &  7.1\% & 100\% & 5.0\% & 99.9\% & 3.7\% & 98.6\% & 11.0\%     \\\hline
      $\networks_4$ & 65x99x87x23x31 & 98.4\% &  7.1\% & 100\% & 7.2\% & 99.8\% & 3.7\% & 98.4\% & 11.2\%  \\\hline
      $\networks_5$ & 49x61x90x21x48 & 89.1\% & 11.4\% & 99.1\%& 9.6\%& 99.4\%& 4.9\%& 98.7\%& 9.1\% \\\hline
      $\networks_6$ & 97x83x32       &100.0\% &  9.4\% & 100\%& 5.6\%& 100\%& 3.7\%& 100\%& 8.0\%  \\\hline
      $\networks_7$ & 33x95x67x43x76 & 86.9\% &  8.8\% & 100\%& 7.2\%& 99.2\%& 3.8\%& 96\%& 12.0\% \\\hline
      $\networks_8$ & 78x62x73x47    & 99.8\% &  8.4\% & 100\%& 9.4\%& 100\%& 4.0\%& 100\%& 7.3\% \\\hline
      $\networks_9$ & 87x33x62       &100.0\% & 12.0\% & 100\%& 10.5\%& 100\%& 5.0\%& 100\%& 6.7\% \\\hline
   $\networks_{10}$ & 76x55x74x98x75 & 86.7\% &  5.8\% & 100\%& 6.1\%& 98.3\%& 2.4\%& 93.9\%& 4.5\% \\\hline

    \end{tabular}
    }
    \captionof{table}{Coverage results on ten DNNs} 
    \label{tab:results}
\end{figure}
\vspace{-.75cm}
\begin{figure}[!htb]
\centering
\begin{tabular}{cccc}
  \subfloat[9 $\rightarrow$ 8]{
    \includegraphics[width=0.175\columnwidth]{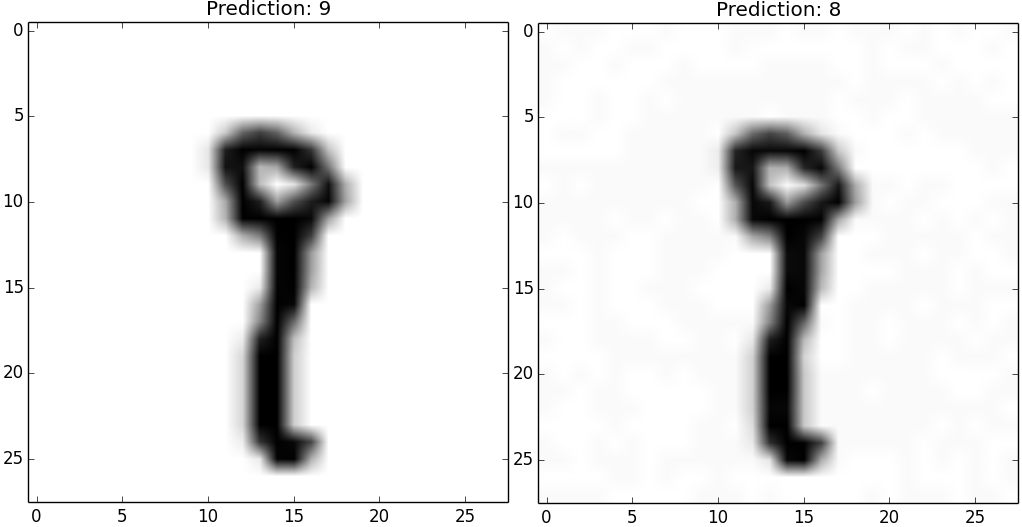}
    \label{fig:ad1}
  }
  &
  \subfloat[8 $\rightarrow$ 2]{
    \includegraphics[width=0.175\columnwidth]{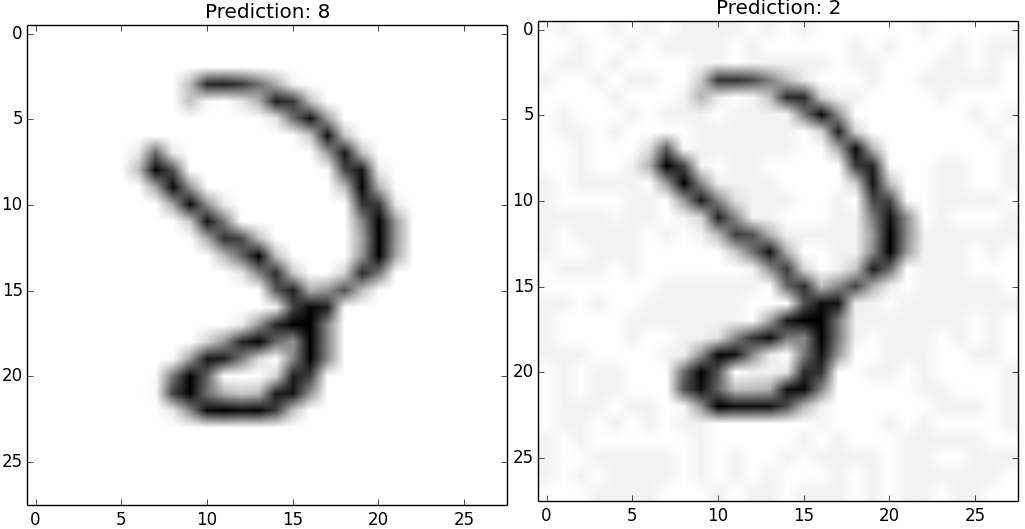}
    \label{fig:ad2}
  }
  & 
  \subfloat[1 $\rightarrow$ 7]{
    \includegraphics[width=0.175\columnwidth]{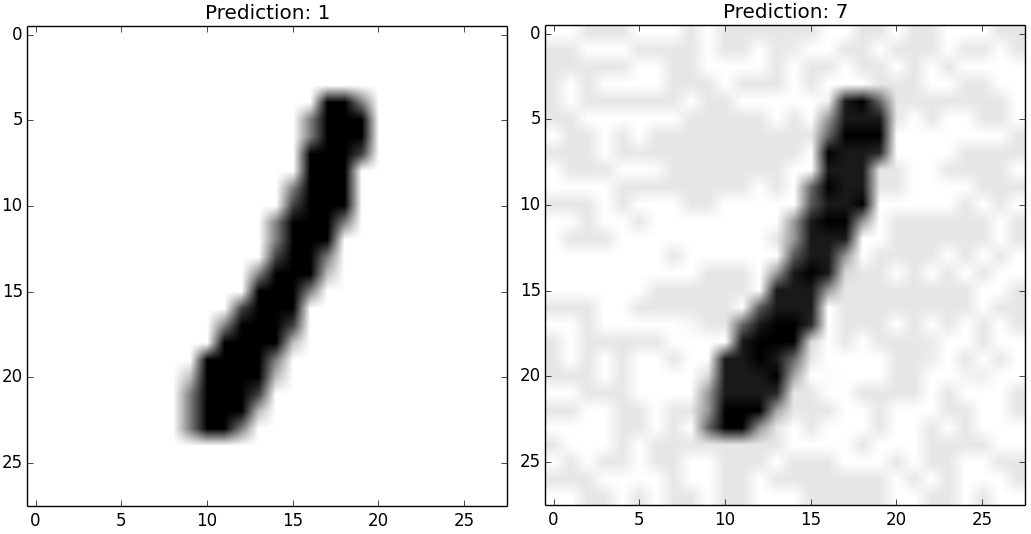}
    \label{fig:ad3}
  }
  &
  \subfloat[0 $\rightarrow$ 9]{
    \includegraphics[width=0.175\columnwidth]{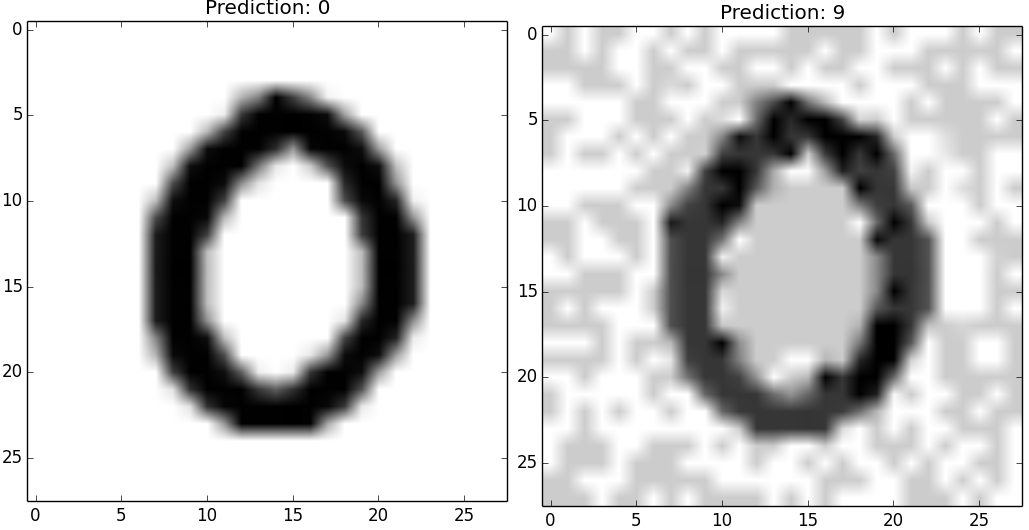}
    \label{fig:ad4}
  }
\end{tabular}
\caption{Selected adversarial examples for MNIST} 
\label{fig:ads}
\end{figure}

\subsection{MNIST}
We randomly generate, and then train, a set of ten fully connected DNNs, such that each network has an accuracy of at
least $97.0\%$ on the MNIST validation data. 
The detailed network structure, and the number of neurons per layer, are given in Table~\ref{tab:results}. Every DNN input has
been normalised into $[0,1]^{28\times 28}$.
Experiments were conducted on a MacBook Pro
(2.5\,GHz Intel Core i5, 8\,GB  memory).

We apply the  covering method defined in Section~\ref{sec:criteria}. 
Besides the coverage $M_f$, we also measure the
percentage of adversarial examples among all test pairs in the test suite,
denoted by $AE_f$. \emph{Thanks to the use of LP optimisation, the feature in this part
is fine-grained to single neuron level.} That is, each feature pair is in fact a neuron pair.

\begin{figure}[!htb]
\centering
\includegraphics[width=0.55\linewidth]{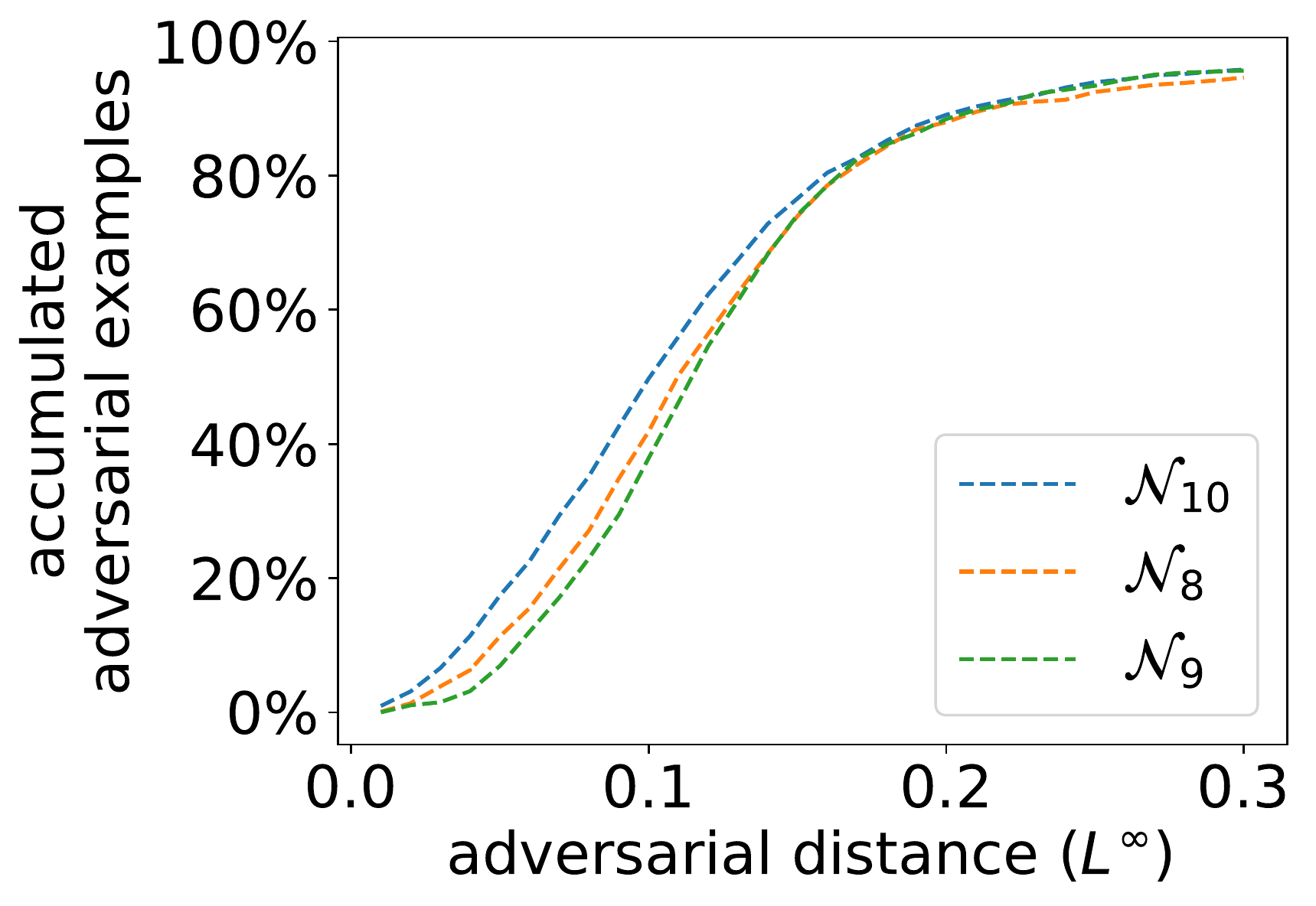}
\caption{Adversarial example curves that record the accumulated percentage of
adversarial examples that fall into each distance:
the adversarial distance measures the distance between an adversarial example and the original input}
\label{fig:ss-distance-map}
\end{figure}
\begin{figure}[!htb]
\centering
\begin{tabular}{cc}
  \subfloat[]{
    \includegraphics[width=0.45\linewidth]{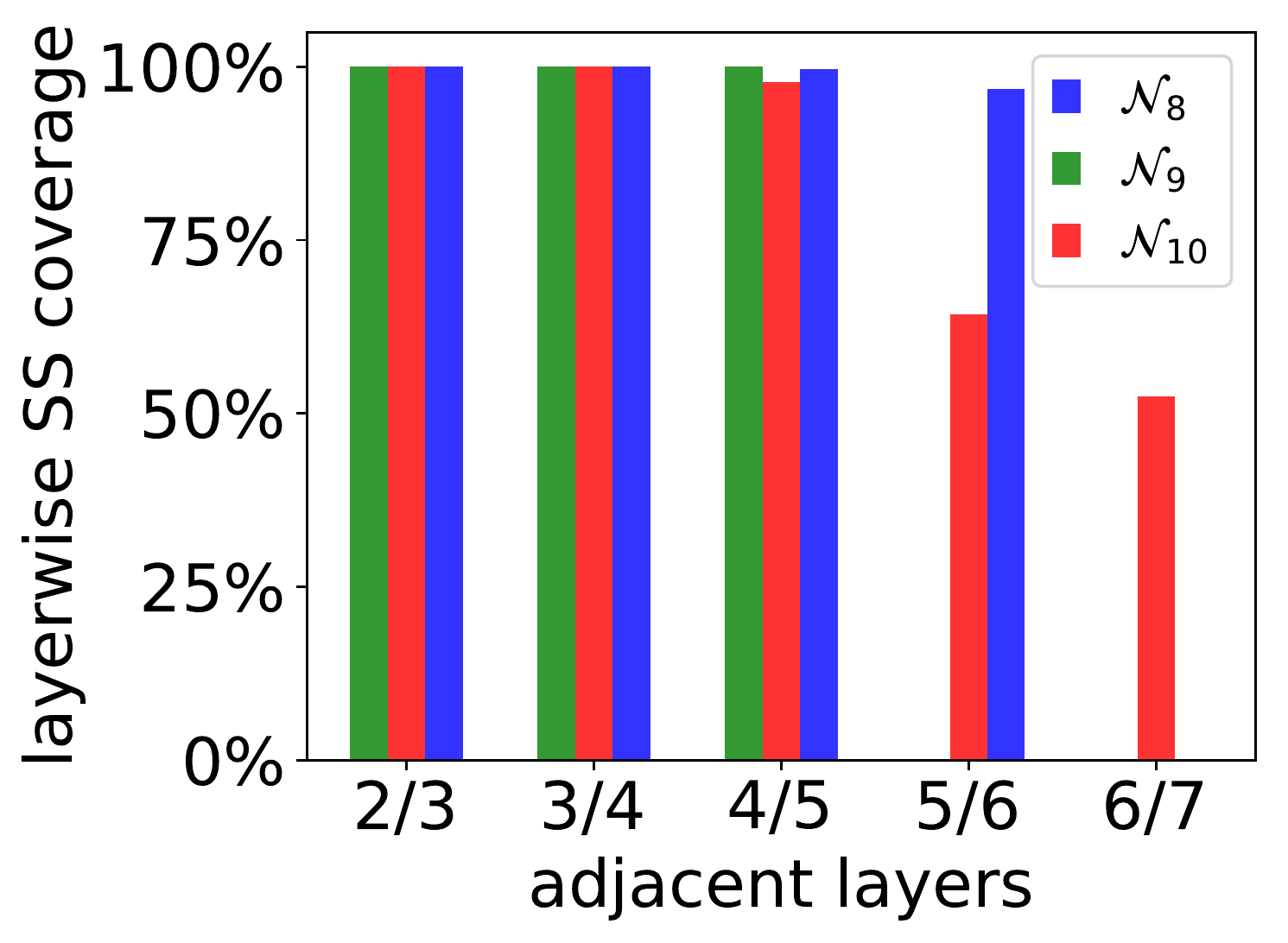}
    \label{fig:scc-layerwise-coverage}
  }\hfill
  &
  \subfloat[]{
    \includegraphics[width=0.45\linewidth]{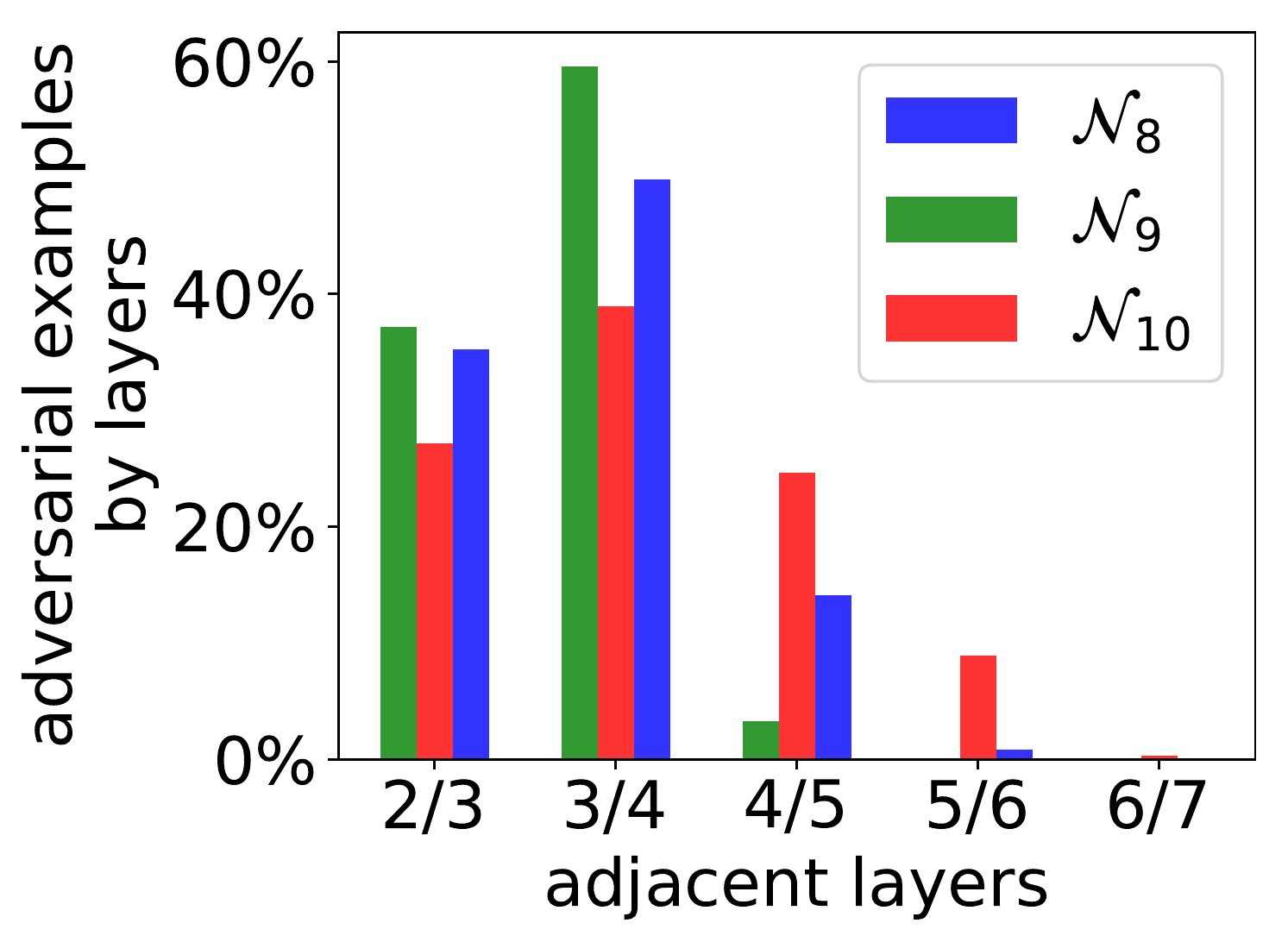}
    \label{fig:scc-layerwise-bugs}
  }
\end{tabular}
\caption{SS coverage by layer: (a) the coverage level per DNN layer; (b) the detected adversarial examples
at each layer with respect to the total amount of adversarial examples} 
\label{fig:ssc-layerwise}
\end{figure}
\paragraph{DNN Bug finding \ding{192}}

The testing results, as reported in Table \ref{tab:results}, are promising: (1) the test case generation algorithm
effectively achieves high coverage for all covering criteria, and (2) the covering methods 
are considered useful, supported by the fact that a significant
portion of adversarial examples are identified.
Figure~\ref{fig:ads} exhibits several adversarial examples found during the testing
with different distances.  
We note that, \emph{for neuron coverage \cite{PCYJ2017}, 
a high coverage can be easily achieved by selecting a few non-adversarial test cases that we generated.}

\paragraph{DNN safety analysis \ding{193}}

The coverage $M_f$ and adversarial example percentage $AE_f$ together provide
quantitative statistics to evaluate a DNN.  Generally speaking, given a test suite, a DNN with
a high coverage level $M_f$ and a low adversarial percentage $AE_f$ is considered robust.  
In addition, we can study the \emph{adversarial quality} by plotting a distance curve
to see how close the adversarial example is to the correct input.
Take a closer look into the results of SS coverage for the last three DNNs in
Table~\ref{tab:results}.  As illustrated in Figure~\ref{fig:ss-distance-map},
the horizontal axis measures the $L^{\infty}$ distance and the vertical axis reports the accumulated
percentage of adversarial examples that fall into this distance.  A~more robust DNN will have
its shape in the small distance end (towards~0) lower, as the reported
adversarial examples are relatively farther from their original correct
inputs.  Intuitively, this means that more effort needs to be made 
to fool a robust DNN from correct classification into mislabelling.

\paragraph{Layerwise behavior \ding{195}}

Our experiments show that different layers of a DNN exhibit different behaviors in testing.
Figure~\ref{fig:ssc-layerwise} reports the SS coverage results, collected in
adjacent layers.  In particular, Figure~\ref{fig:scc-layerwise-coverage}
gives the percentage of covered neuron pairs within individual adjacent
layers.  As we can see, when going deeper into the DNN, it can become harder
to cover of neuron pairs.  Under such circumstances, to improve the
coverage performance, the use of larger a $data\_set$ is needed when generating test
pairs.  Figure~\ref{fig:scc-layerwise-bugs} gives the percentage
of adversarial examples found at different layers (among the overall adversarial examples).  Interestingly, it seems
that most adversarial examples are found when testing the middle layers.

\begin{figure}[!htb]
\centering
\includegraphics[width=0.55\columnwidth]{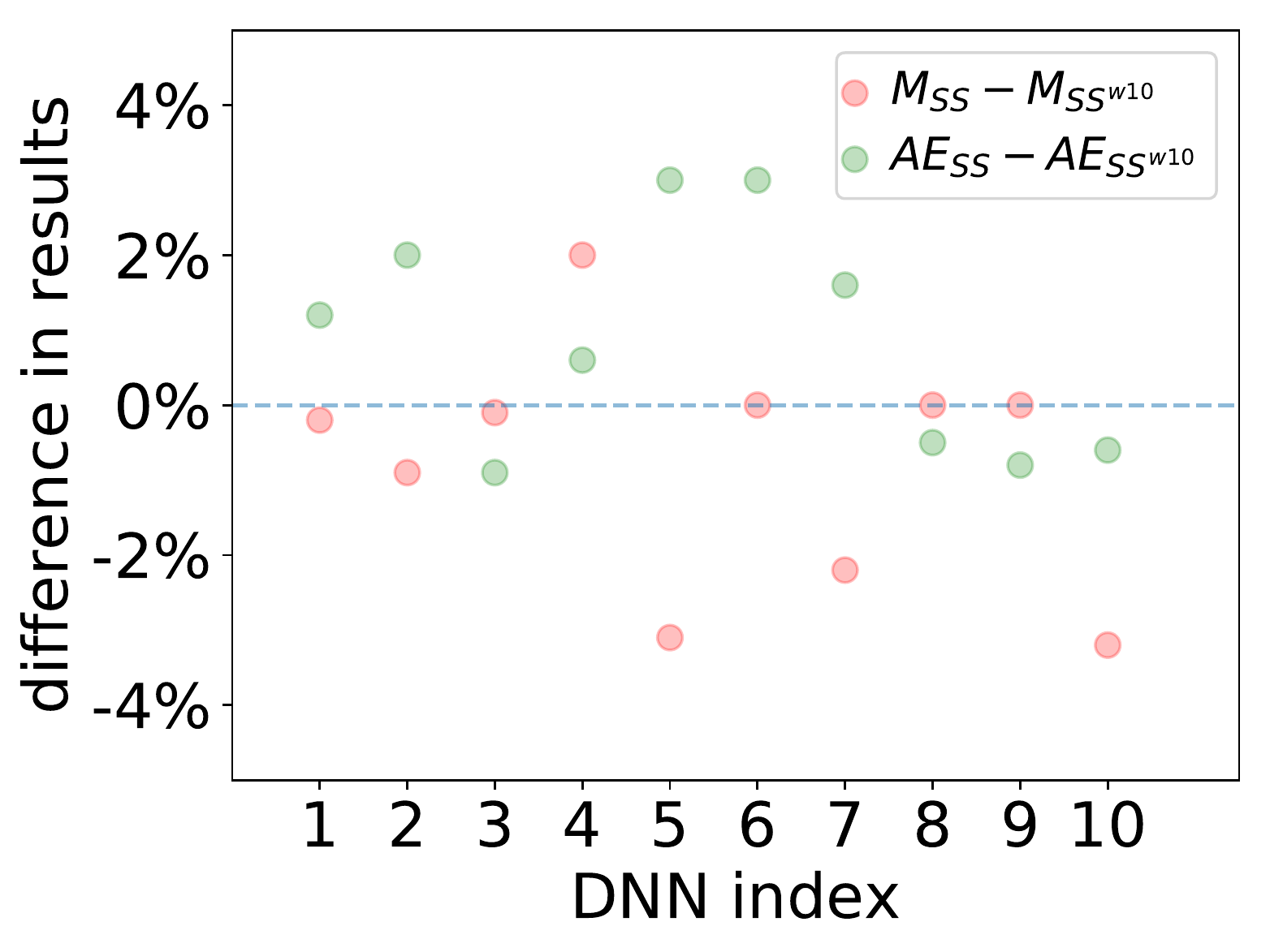}
\caption{$\covered{SS}{}{}$ vs. $\covered{SS}{w10}{}$. Results demonstrate that
the SS coverage and its top-weight simplification have similar coverage levels ($M_{SS}-M_{SS^{w10}}$)
and percentages of adversarial examples ($AE_{SS}-AE_{SS^{w10}}$)}
\label{fig:ss-top10}
\end{figure}
\paragraph{SS coverage with top weights \ding{194}}
For SS coverage criteria with neuron pairs, there are
totally $|O|$ test conditions for $O\subseteq \neuronpairs(\networks)$. 
We note that $|\neuronpairs(\networks)|= \sum_{k=2}^{K} s_k \cdot s_{k-1}$. 
To reduce the test suite size, we define 
$O$ as follows: $(\feature_{k,i},\feature_{k+1,j})\in O$ only when
the weight is one of the $\kappa$ largest among $\{|w_{k,i',j}|~|~i'\in
[1\dots s_{k}]\}$. The rationale is that condition neurons do not
equally affect their decision, and those with higher (absolute) weights are
likely to have a larger influence. 

Figure~\ref{fig:ss-top10} shows the difference, on coverage and adversarial example 
percentages, between SS coverage and its simplification with $\kappa=10$, denoted by $SS^{w10}$.  
In general, the two are comparable.  This is very
useful in practice, as the ``top weights'' simplification mitigates the size of the rsulting test suite, and it is thus able to behave as a faster pre-processing phase and
even provide an alternative with comparable results for SS coverage.

\paragraph{Cost of LP call \ding{194}}

Since LP encoding of the DNN (partial) activation pattern plays a key role
in the test generation, in this part we give details of the LP call cost,
even though LP is widely accepted as an efficient method. 
For every DNN, we select a set of neuron pairs, where each decision neuron
is at a different layer.  Then, we measure the number of variables and
constraints, and the time $t$ in seconds (averaged over 100
runs) spent on solving each LP call.  Results in Table
\ref{tab:lp-call} confirm that the LP model of a partial activation pattern
is indeed lightweight, and its complexity increases in a linear manner when
traversing into deeper layers of a DNN.
\begin{table}
\centering
\scalebox{1.}{
  \begin{tabular}{|l|ccc|ccc|ccc|} \hline
    \multirow{2}{*}{         } & \multicolumn{3}{c|}{$\networks_8$} &  \multicolumn{3}{c|}{$\networks_9$} & \multicolumn{3}{c|}{$\networks_{10}$}\\\cline{2-10}
                             & \#vars & $|\constraints|$ & $t$ & \#vars & $|\constraints|$ & $t$ & \#vars & $|\constraints|$ & $t$ \\\hline
    $L2\textnormal{-}3$             & 864 & 3294 & 0.58 & 873 & 3312 & 0.57 & 862 & 3290 & 0.49 \\
    $L3\textnormal{-}4$             & 926 & 3418 & 0.84 & 906 & 3378 & 0.61 & 917 & 3400 & 0.71 \\
    $L4\textnormal{-}5$             & 999 & 3564 & 0.87 & 968 & 3502 & 0.86 & 991 & 3548 & 0.75 \\
    $L5\textnormal{-}6$             & 1046& 3658 & 0.91 & --& --   &  --    & 1089& 3744 & 0.82 \\
    $L6\textnormal{-}7$             & -- & -- & -- & -- & -- & -- & 1164 & 3894 & 0.94 \\ 
    \hline
  \end{tabular}
}
\caption{Number of variables and constraints, and time cost of each LP call in test generation}
\label{tab:lp-call}
\end{table}

\subsection{CIFAR-10}

The CIFAR-10 dataset is a collection of 32x32 color images in ten kinds of objects. Different from the MNIST case,
we need to train a DNN with convolutional layers in order to handle the CIFAR-10 image classification problem.
Without loss of generality, the activation of a node in the convolutional layer 
is computed  by the activations of a subset of precedent nodes, and each node belongs to a \emph{feature map} in its layer. 
We apply the test case generation in Algorithm \ref{algo:aac-get-input-pair-search} for the SS coverage and  measure
the coverage results individually for decision features at each different layer. Overall, an SS coverage higher than 90\% is achieved
with a significant portion of adversarial examples.  
An interesting observation is as in Figure \ref{fig:cifar-adv-distribution} (\ding{195}), 
which shows that in this case
the causal changes of features at deeper layers are able to detect smaller perturbations of inputs that cause adversarial behaviours,
and this is likely to provide helpful feedback for developers to debug or tune the neural network parameters.
Selected adversarial examples are given in Figure \ref{fig:cifar-advs}. 

\begin{figure}[!htb]
\centering
\includegraphics[width=0.50\columnwidth]{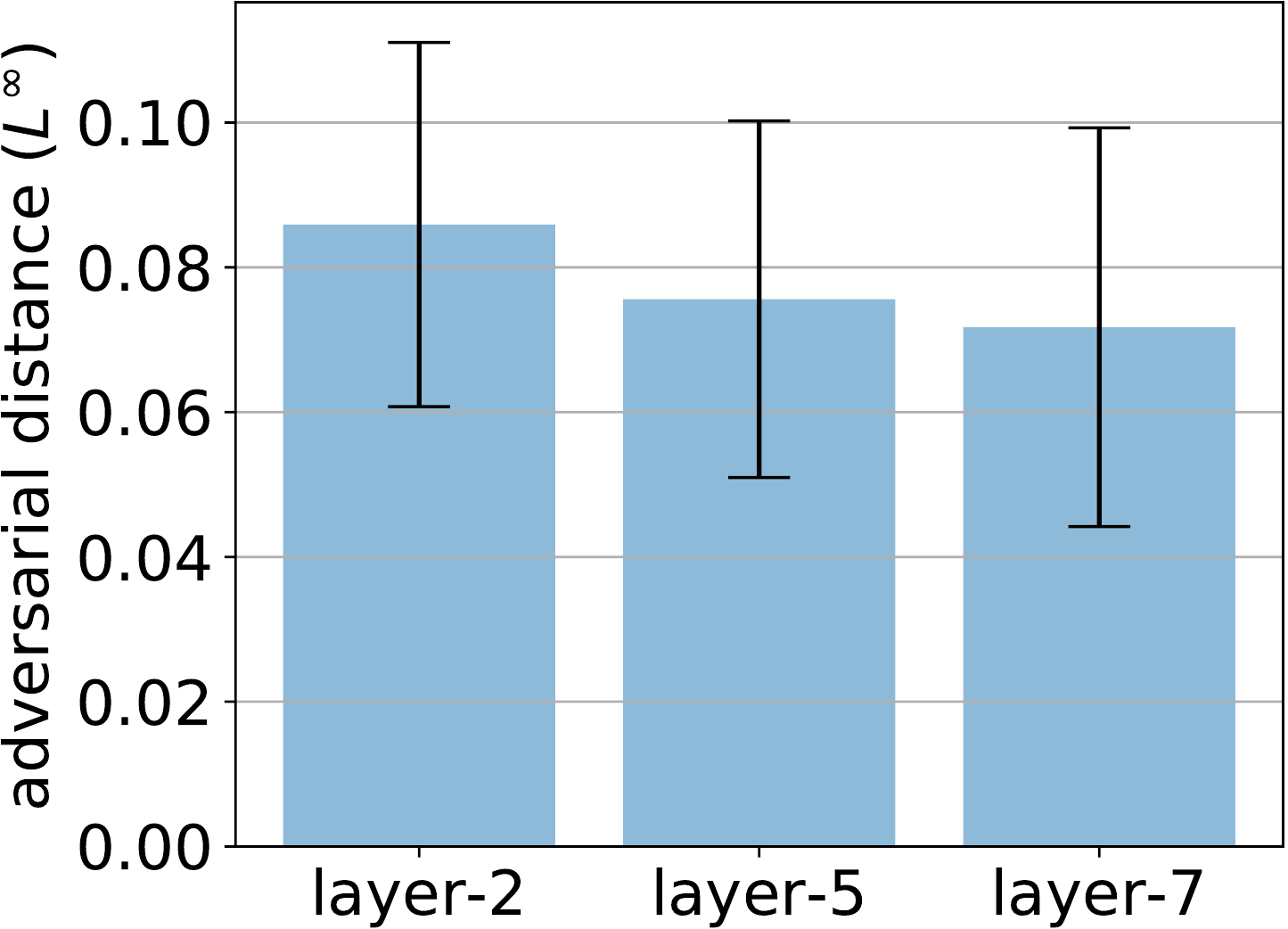}
\caption{The averaged adversarial distance for decision features at different layers}
\label{fig:cifar-adv-distribution}
\end{figure}

\begin{figure}[!htb]
\captionsetup{justification=centering}
\centering
\begin{tabular}{cc}
  \subfloat[bird $\rightarrow$ airplane]{
    \includegraphics[width=0.125\columnwidth]{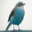}
    \includegraphics[width=0.125\columnwidth]{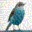}
    \label{fig:ad3}
  }
  &
  \subfloat[airplane $\rightarrow$ cat]{
    \includegraphics[width=0.125\columnwidth]{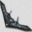}
    \includegraphics[width=0.125\columnwidth]{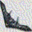}
    \label{fig:ad4}
  }
\end{tabular}
\caption{Selected adversarial examples for CIFAR-10}
\label{fig:cifar-advs}
\end{figure}

\subsection{ImageNet}

We applied our methods to VGG16~\cite{vgg16}, 
a large-scale DNN trained on the ImageNet dataset. 
The heuristic search Algorithm~\ref{algo:aac-get-input-pair-search} is called to generate test cases. 
We consider each
decision feature as a single set of neurons. 
While we can use feature extraction methods such as SIFT \cite{SIFT} to obtain condition features, in our experiments we consider each
condition feature as an arbitrary set of neurons for better exploration of the testing method.  
In particular, a size parameter $\omega$ is defined for the
experiments such that a feature $\feature_{k,i}$ is required to have its size $\leq\omega\cdot s_k$. Recall that 
 $s_k$ is the number of neurons in layer $k$.

\paragraph{Different feature sizes~\ding{192}~\ding{193}~\ding{195}}
We apply SS coverage on 2,000 randomly sampled feature pairs 
with 
$\omega\in\{0.1\%, 0.5\%, 1.0\%\}$. The covering method shows its effectiveness by returning
a test suite in which  $10.5\%$, $13.6\%$ and $14.6\%$ are
adversarial examples. 
We report the adversarial examples' average distance and standard deviation
in Figure~\ref{fig:ssc-errorbar}.  The results confirm that there is a
relation between the feature pairs and the input perturbation.  Among the
generated adversarial examples, a more fine-grained feature is able to
capture smaller perturbations
than a coarse one.

\begin{figure}[!htb]
\centering
\includegraphics[width=0.55\columnwidth]{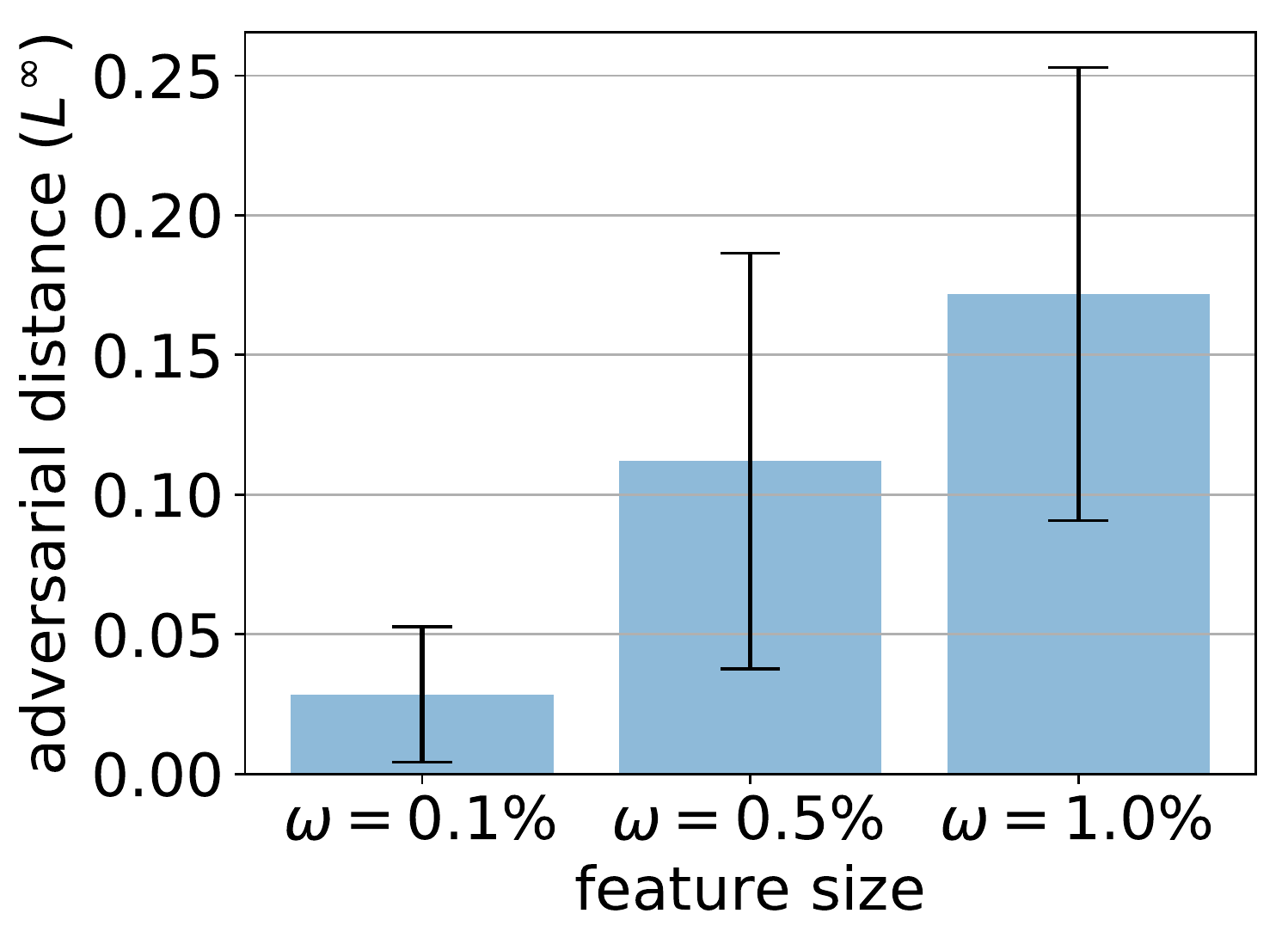}
\caption{Adversarial distance with different feature sizes: a smaller distance corresponds to more subtle
adversarial examples}
\label{fig:ssc-errorbar}
\end{figure}

Results in Figure~\ref{fig:ssc-errorbar} are measured with $L^{\infty}$-norm that 
corresponds to the maximum changes 
to a pixel. We observed that, though the change of each pixel is very small, for every adversarial example
a large portion (around $50\%$) of pixels are changed. A typical adversarial example image 
is given in Figure~\ref{fig:adv-traffic-light}. Overall, the detected adversarial examples are considered of high quality.

\begin{figure}[!htb]
\centering
\includegraphics[width=.75\columnwidth]{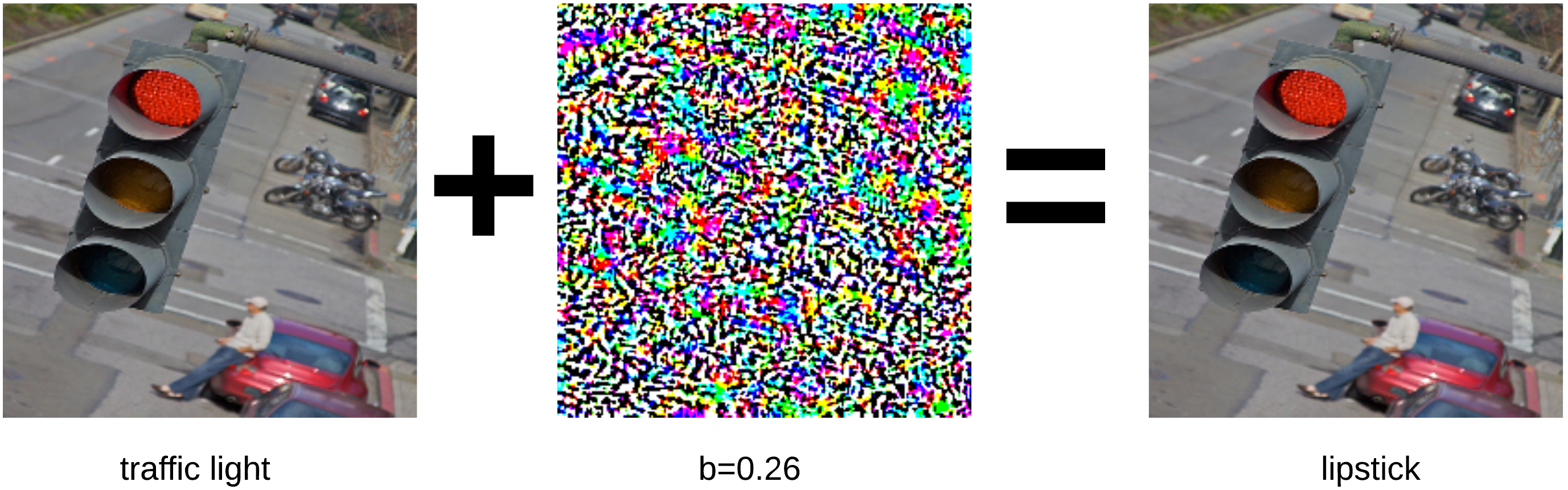}
\caption{An adversarial example (``lipstick'') for the original traffic light input} 
\label{fig:adv-traffic-light}
\end{figure}

\paragraph{SV with neuron boundary coverage~\ding{192}~\ding{193}}
\vspace{-0.45cm}
As shown in Section \ref{sec:comparison}, our covering methods are stronger than
neuron boundary coverage. In fact, neuron boundary is a special case of SV coverage,
when the value function 
of the decision feature is designed to make
the activation exceed the specified boundary value. We also validated this relation
in the empirical manner, similarly to the experiments
above, by generating a test suite using SV with neuron boundary coverage. We noticed that
accessing boundary activation values is likely to request bigger changes to be made in DNNs.
We set the feature size using $\omega=10\%$ and obtain a test suite with
$22.7\%$ adversarial examples. 
However, the distance of these adversarial examples, with average $L^{\infty}$-norm distance 3.49 and standard deviation 3.88, is much greater than those
for the SS coverage, as in Figure \ref{fig:ssc-errorbar}.
\commentout{
\begin{figure}[!htb]
\centering
\includegraphics[width=0.5\columnwidth]{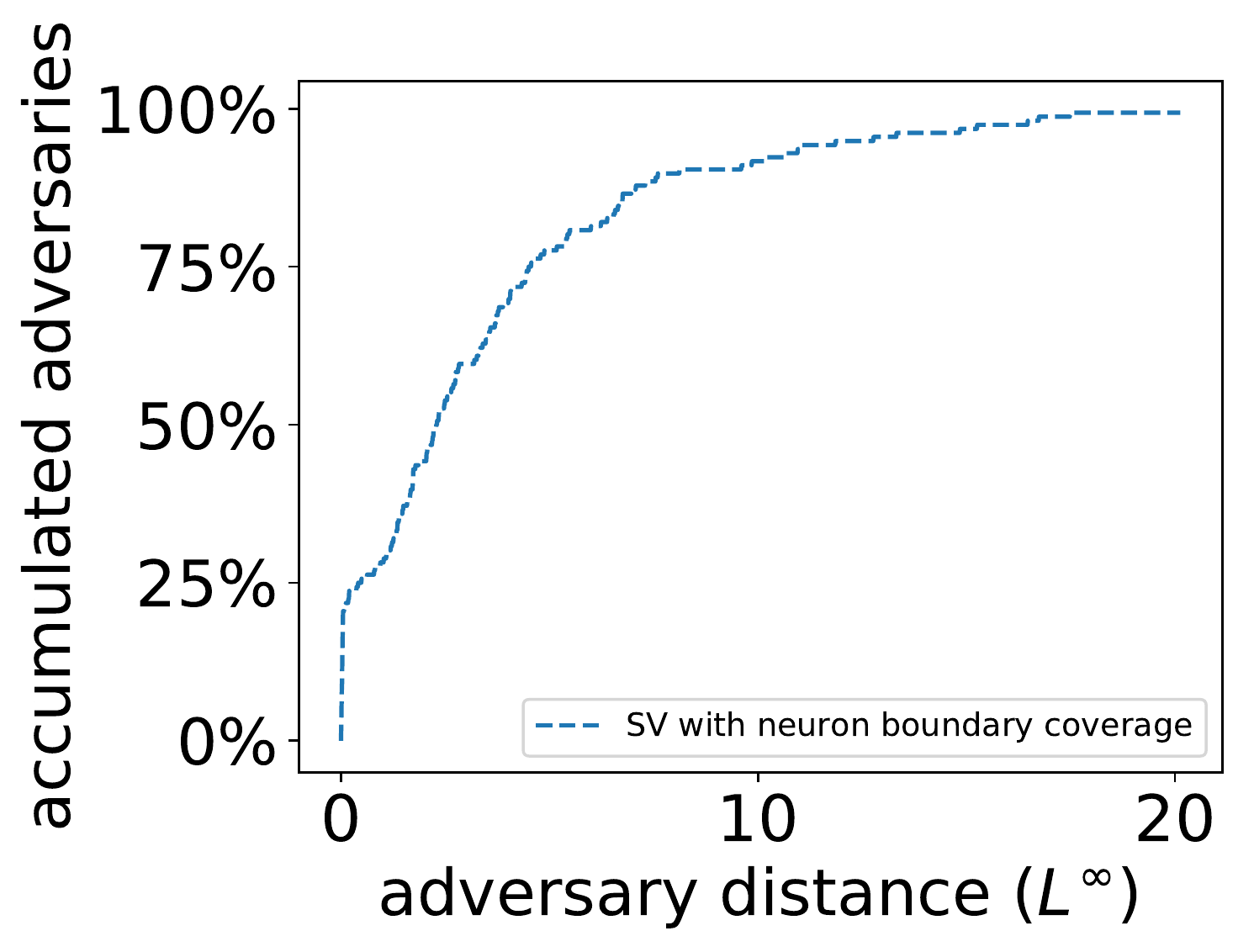}
\caption{The adversarial example curve when using SV with neuron boundary coverage}
\label{fig:sv-curve}
\end{figure}
}

\section{Related Work} 
\label{sec:related}

In the following, we briefly discuss existing
techniques looking to validate safety properties of DNNs.

\paragraph{Generation of Adversarial Examples for DNNs}

Most existing work, e.g.,~\cite{szegedy2014intriguing,FGSM,PMJFCS2015,CW2016} 
applies various heuristic algorithms, generally using search
algorithms based on gradient descent or evolutionary techniques. 
These approaches may be able to find adversarial examples efficiently, but are \emph{not able to
provide any guarantee} (akin to verification) or \emph{any certain level of
confidence} (akin to testing) about the nonexistence of adversarial examples
when the algorithm fails to find one.

\paragraph{Testing of DNNs}

At present, there are only a few proposals for structural DNN test coverage criteria. 
In~\cite{PCYJ2017}, {neuron coverage} is proposed to cover each neuron's binary activation
statuses. It is applied in \cite{tian2017deeptest} to guide the testing of DNN-driven autonomous
cars. Extensions of {neuron coverage} are made in
\cite{ma2018deepgauge}, which include a set of test criteria to check the 
corner values of a neuron's activation and the activation levels of
a subset of neurons in the same layer. However, criteria in \cite{PCYJ2017,ma2018deepgauge}
simply ignore the key causal relationship in a DNN.
Odena and Goodfellow \cite{odena2018tensorfuzz}
apply the approximate nearest neighbors algorithm to guide their tests generation,
but it is not clear, in a DNN, what the maximum number of nearest neighbors are.
As shown in \cite{lan2018design}, quantitative DNN coverage criteria 
can be applied to the design and certification of automotive systems with deep learning components.
%
%
%
%

In~\cite{WHK2018}, the input space is discretised with
hyper-rectangles, and then one test case is generated for each
hyper-rectangle.  
The resulting safety coverage is a strong criterion, but the generation
of a test suite can be very expensive.
Whilst in~\cite{huang-atva18}, coverage is enforced
to finite partitions of the input space, relying on predefined sets of
application-specific scenario attributes. 
The "boxing clever" technique in~\cite{box-clever} focuses on the distribution of training data and
divides the input domain into a series of representative boxes.
In \cite{kim2018guiding}, the difference between test dataset and training dataset
is measured by quantifying the difference between DNNs' activation patterns.

Some traditional test case generation techniques such as concolic
testing~\cite{sun2018concolic,sun2019concolic}, symbolic
execution~\cite{gopinath2018symbolic} and fuzzing~\cite{odena2018tensorfuzz,xie2018coverage}
have been recently extended to DNNs.  Mutation testing has similarly been investigated
in~\cite{wang2018adversarial,wang2018detecting,deepmutation,cheng2018manifesting,shenmunn}.
And metamorphic testing~\cite{ding2017validating,dwarakanath2018identifying,deeproad} has been identified as a suitable test
oracle for the robustness problem.  The combinatorial method is explored to
reduce the testing space for DNNs in~\cite{ma2018combinatorial}. 
Multi-implementation testing is applied to $k$-Nearest
Neighbor (kNN) and Naive Bayes supervised learning algorithms in~\cite{xie18a}. 
In~\cite{udeshi2018automated}, the adversarial inputs are treated as the
fairness problem via testing.

Tensorflow~\cite{tensorflow} is a popular library for developing deep
learning models, and Zhang et al~\cite{xiong-issta18} studied a collection
of 175 bugs in Tensorflow programs.  A testing framework is developed
in~\cite{xie18b} for learning based malware detection applications in
Android.  Autonomous driving is the primary application domain for
assessments of DNN testing techniques~\cite{yaghoubi-hscc,dreossi2017systematic,tuncali2018simulation}.

\paragraph{Automated Verification of DNNs}

The safety problem of a DNN can be reduced into a constraint solving problem~\cite{dreossi2019formalization}.
SMT~\cite{HKWW2017,PT2010,katz2017reluplex,tuncali2018reasoning}, MILP
\cite{bunel2017piecewise,LM2017,CNR2017,xiang2017output,dutta2018output} and SAT
\cite{NKPSW2017,narodytska2018formal} solutions 
have already been considered. In \cite{verisig}, the DNN is transformed into  into an equivalent hybrid
system. These approaches typically only work with small networks
with a few hundred hidden neurons, and approximation techniques \cite{mirman2018differentiable,dreossi2018compositional,wong2018provable,reluval,gehr2018ai,Dutta2019,sun2019hscc} can be
applied to improve the efficiency. 
Another thread of work~\cite{RHK2018,ruan2018global,WWRHK2018} based on global optimisation is promising to work with larger networks.

%

\section{Conclusions}
\label{sec:concl}

We have proposed a set of novel test criteria for DNNs. 
Our experiments on various datasets and test case generation methods
show promising results, indicating the feasibility and effectiveness of
the proposed test criteria. 
%
%
The test coverage 
metrics developed within this paper provide a method to
obtain evidence towards adversarial robustness, which is envisaged to
contribute to safety cases.  The 
metrics
%
%
are also expected to provide additional insights for domain experts when
they are considering the adequacy of a particular dataset for use in an
application.  

%
%
%


\subsubsection*{Acknowledgements}

This document is an overview of UK MOD (part) sponsored research and is released for informational purposes only. The contents of this document should not be interpreted as representing the views of the UK MOD, nor should it be assumed that they reflect any current or future UK MOD policy. The information contained in this document cannot supersede any statutory or contractual requirements or liabilities and is offered without prejudice or commitment.

Content includes material subject to \textcopyright~ Crown copyright (2018), Dstl. This material is licensed under the terms of the Open Government Licence except where otherwise stated. To view this licence, visit \url{http://www.nationalarchives.gov.uk/doc/open-government-licence/version/3} or write to the Information Policy Team, The National Archives, Kew, London TW9 4DU, or email: \email{psi@nationalarchives.gsi.gov.uk}.

\bibliographystyle{unsrt}
\bibliography{all}

\newpage
\appendix

\section*{Appendix}

This section gives the proofs for relations given in Section~\ref{sec:comparison}. 


\commentout{

\begin{theorem}
$M_{\covered{N}{}{}} \preceq M_{\covered{SS}{}{}}$, 
$M_{\covered{N}{}{}} \preceq M_{\covered{VS}{\valuefunction}{}}$, 
$M_{\covered{N}{}{}} \preceq M_{\covered{MN}{m}{}}$, 
$M_{\covered{MN}{m}{}} \preceq M_{\covered{SV}{\valuefunction}{}}$, 
$M_{\covered{MN}{m}{}} \preceq M_{\covered{VV}{\valuefunction_1,\valuefunction_2}{}}$, 
$M_{\covered{NB}{}{}} \preceq M_{\covered{SV}{\valuefunction}{}}$, 
$M_{\covered{NB}{}{}} \preceq M_{\covered{VV}{\valuefunction_1,\valuefunction_2}{}}$, $M_{\covered{TN}{m}{}} \preceq M_{\covered{VV}{\valuefunction_1,\valuefunction_2}{}}$, $M_{\covered{TN}{m}{}} \preceq M_{\covered{VS}{\valuefunction}{}}$, where for some relations, functions $\valuefunction$, $\valuefunction_1$ and $\valuefunction_2$ need to be instantiated into suitable  functions. 
\end{theorem}

}

\begin{lemma}\label{lemma:nss}
$M_{\covered{N}{}{}} \preceq M_{\covered{SS}{}{}}$.
\end{lemma}
\begin{proof}
Note that, for every hidden  node $n_{k,j}\in {\cal H}(\networks)$, there exists a feature pair $ (\{n_{k-1,i}\},\{n_{k,j}\})\in \neuronpairs(\networks)$ for any $1\leq i\leq s_{k-1}$. Then, by Definition~\ref{def:ssc}, we have $sc(\{n_{k,j}\},x_1,x_2)$, which by Definition~\ref{def:sc} means that $\mathit{sign}(n_{k,j},x_1) \neq
\mathit{sign}(n_{k,j},x_2)$. That is, either $\mathit{sign}(n_{k,j},x_1)=+1$ or $\mathit{sign}(n_{k,j},x_2)=+1$. Therefore, if $n_{k,j}$ is not covered in a test suite $\testsuites_1$ for neuron coverage, none of the pairs $ (\{n_{k-1,i}\},\{n_{k,j}\})$ for $1\leq i\leq s_{k-1}$ is covered in a test suite $\testsuites_2$ for SS coverage. 
\end{proof}

\begin{lemma}
$M_{\covered{N}{}{}} \preceq M_{\covered{VS}{\valuefunction}{}}$.
\end{lemma}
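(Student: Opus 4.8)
The plan is to mirror the proof of Lemma~\ref{lemma:nss} almost verbatim, substituting VS coverage (Definition~\ref{def:dsc}) for SS coverage. Recall that $M_1 \preceq M_2$ means: for every test suite $\testsuites$ on $\networks$, $M_1(\networks,\testsuites) < 1$ implies $M_2(\networks,\testsuites) < 1$. So I would fix an arbitrary test suite $\testsuites$, assume $M_{\covered{N}{}{}}(\networks,\testsuites) < 1$, and derive $M_{\covered{VS}{\valuefunction}{}}(\networks,\testsuites) < 1$.

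First I would note that for every hidden node $n_{k,j} \in {\cal H}(\networks)$ (so $2 \le k \le K-1$) and every $1 \le i \le s_{k-1}$, the singleton feature pair $(\{n_{k-1,i}\}, \{n_{k,j}\})$ belongs to $\neuronpairs(\networks)$ by Definition~\ref{def:neuron-pair}. The second clause of Definition~\ref{def:dsc} says that if this pair is VS-covered by inputs $x_1, x_2$, then $sc(\{n_{k,j}\}, x_1, x_2)$ holds; by Definition~\ref{def:sc} this means $\mathit{sign}(n_{k,j},x_1) \neq \mathit{sign}(n_{k,j},x_2)$, hence $\mathit{sign}(n_{k,j},x_1) = +1$ or $\mathit{sign}(n_{k,j},x_2) = +1$, so $n_{k,j}$ is neuron covered (Definition~\ref{def:neuron-coverage}) by one of $x_1, x_2$. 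Contrapositively: if $n_{k,j}$ is \emph{not} neuron covered by any input in $\testsuites$, then no pair $(\{n_{k-1,i}\}, \{n_{k,j}\})$ is VS-covered by any pair of inputs from $\testsuites$. Since $M_{\covered{N}{}{}}(\networks,\testsuites) < 1$ supplies at least one such uncovered hidden node, and $\neuronpairs(\networks)$ contains pairs with that node as the decision feature, we conclude $M_{\covered{VS}{\valuefunction}{}}(\networks,\testsuites) < 1$, as required.

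I do not expect a real obstacle here: the argument uses only the sign-change requirement on the decision feature, while the value-change condition and the $nsc$ condition that appear in the first clause of Definition~\ref{def:dsc} are never invoked — they merely make VS coverage strictly harder to achieve, which can only reinforce the $\preceq$ direction. The one point worth stating explicitly is that the conclusion holds uniformly for every value function $\valuefunction$, since $\valuefunction$ plays no role in the proof.
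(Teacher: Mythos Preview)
Your proposal is correct and follows exactly the approach the paper intends: the paper's own proof simply reads ``Follow a similar argument with Lemma~\ref{lemma:nss},'' and you have spelled out precisely that argument, using only the $sc(\feature_{k+1,j},x_1,x_2)$ clause of Definition~\ref{def:dsc} to reduce to the neuron-coverage case.
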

\begin{proof}
Follow a similar argument with Lemma~\ref{lemma:nss}. 
\end{proof}

\begin{lemma}
$M_{\covered{N}{}{}} \preceq M_{\covered{MN}{m}{}}$, when the interval $[v_{k,i}^l,v_{k,i}^u]$ is non-trivial, i.e., not $[0,0]$, for all nodes $n_{k,i}\in {\cal H}$.
\end{lemma}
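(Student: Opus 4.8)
The plan is to work from the contrapositive of the definition of $\preceq$. By that definition, $M_{\covered{N}{}{}} \preceq M_{\covered{MN}{m}{}}$ means: for every test suite $\testsuites$ on $\networks$, $M_{\covered{N}{}{}}(\networks,\testsuites)<1$ implies $M_{\covered{MN}{m}{}}(\networks,\testsuites)<1$, which is equivalent to showing that $M_{\covered{MN}{m}{}}(\networks,\testsuites)=1$ implies $M_{\covered{N}{}{}}(\networks,\testsuites)=1$. So I would assume $\testsuites$ attains full $m$-multisection neuron coverage and fix an arbitrary hidden node $n_{k,i}\in{\cal H}(\networks)$; the goal is then to produce some $x\in\testsuites$ with $\mathit{sign}(n_{k,i},x)=+1$, i.e., with $\covered{N}{}(n_{k,i},x)$.

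The key step is to look at the topmost section $I_{k,i}^m$ of $I_{k,i}=[v_{k,i}^l,v_{k,i}^u]$, whose lower endpoint equals $\tfrac{1}{m}v_{k,i}^l+\tfrac{m-1}{m}v_{k,i}^u$. Since every variable $v_{k,i}$ is a ReLU output we have $v_{k,i}^l\ge 0$ and $v_{k,i}^u\ge 0$, and the non-triviality hypothesis $[v_{k,i}^l,v_{k,i}^u]\neq[0,0]$ then forces $v_{k,i}^u>0$; consequently (for $m\ge 2$) the lower endpoint of $I_{k,i}^m$ is strictly positive. Full multisection coverage supplies some $x\in\testsuites$ with $v_{k,i}[x]\in I_{k,i}^m$, hence $v_{k,i}[x]>0$. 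By the ReLU equation~\eqref{eq:relu}, $v_{k,i}[x]>0$ can only happen when $u_{k,i}[x]=v_{k,i}[x]$, so $\mathit{sign}(n_{k,i},x)=+1$ and $n_{k,i}$ is neuron-covered by $x$. As $n_{k,i}$ was arbitrary, $M_{\covered{N}{}{}}(\networks,\testsuites)=1$, which closes the contrapositive.

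I expect the only subtle point to be the edge cases lurking behind the phrase ``non-trivial interval''. If $v_{k,i}^l=0$, then the bottom section $I_{k,i}^1$ might be witnessed by an input that merely \emph{deactivates} $n_{k,i}$ (so $v_{k,i}[x]=0$), which would not give neuron coverage; this is exactly why the argument must pick a section with a provably positive lower endpoint rather than an arbitrary one, and the topmost section works precisely because $v_{k,i}^u>0$. The argument also quietly relies on $m\ge 2$ so that $\tfrac{m-1}{m}>0$; for the degenerate case $m=1$ the hypothesis would have to be strengthened to $v_{k,i}^l>0$ for all hidden nodes. Beyond pinning down this choice of section, the remainder is a routine unfolding of Definition~\ref{def:neuron-coverage}, the multisection-coverage definition, and~\eqref{eq:relu}.
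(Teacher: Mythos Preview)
Your proof is correct and follows the same route as the paper: show that full $m$-multisection coverage of a hidden node implies its neuron coverage, hence $M_{\covered{MN}{m}{}}(\networks,\testsuites)=1\Rightarrow M_{\covered{N}{}{}}(\networks,\testsuites)=1$. The paper's one-line argument simply asserts that, since the $m$ subsections lie in a non-trivial interval, filling \emph{any} subsection already yields neuron coverage; your explicit choice of the top section $I_{k,i}^m$ and your remarks on the $v_{k,i}^l=0$ and $m=1$ edge cases are in fact more careful than the paper's own proof on exactly the point it glosses over.
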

\begin{proof}
Because $m$ subsections are all in $[v_{k,i}^l,v_{k,i}^u]$ and  $[v_{k,i}^l,v_{k,i}^u]$ is non-trivial, we have that the neuron coverage of a node $n_{k,i}$ is satisfied whenever any subsection of  $[v_{k,i}^l,v_{k,i}^u]$ is filled. 
\end{proof}

We remark that the condition about the non-trivial intervals are reasonable. First of all, in practice, all the DNNs we work with satisfy this condition. Second, if a node always have value $0$ for all the training samples then such a node can be seen as redundant.

\begin{lemma}\label{lemma:mnsv}
$M_{\covered{MN}{m}{}} \preceq M_{\covered{SV}{\valuefunction}{}}$ for a suitable function $\valuefunction$. 
\end{lemma}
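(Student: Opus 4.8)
The plan is to work with the contrapositive form of the ordering: since coverage values lie in $[0,1]$, the claim $M_{\covered{MN}{m}{}}\preceq M_{\covered{SV}{\valuefunction}{}}$ is equivalent to saying that for a suitable choice of $\valuefunction$ and every test suite $\testsuites$, $M_{\covered{SV}{\valuefunction}{}}(\networks,\testsuites)=1$ implies $M_{\covered{MN}{m}{}}(\networks,\testsuites)=1$. So I would fix an arbitrary hidden node $n$ whose value interval $[v_n^l,v_n^u]$ is non-trivial (the same mild hypothesis used for $M_{\covered{N}{}{}}\preceq M_{\covered{MN}{m}{}}$), write $I_n^1,\dots,I_n^m$ for its $m$ equal subsections, and aim to show that every $I_n^j$ contains $v_n[x]$ for some $x\in\testsuites$; repeating this for all hidden nodes then gives $M_{\covered{MN}{m}{}}(\networks,\testsuites)=1$.

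The key step is to choose $\valuefunction$ so that the $m$ subsection constraints for $n$ are encoded, one at a time, into the decision-side value changes of $m$ distinct feature pairs that all contain $n$ in their decision feature. Exploiting that $\valuefunction(\feature,\cdot,\cdot)$ is allowed to depend on the feature $\feature$ to which it is applied, and that $\neuronpairs(\networks)$ contains a feature pair for each admissible (condition feature, decision feature), I would let $\valuefunction$ on a decision feature $\feature\ni n$ demand that $v_n[x_2]$ land in a subsection of $[v_n^l,v_n^u]$ prescribed by $\feature$, arranging — as $\feature$ ranges over a family of features all containing $n$ (for instance $\{n\}$ together with $\{n,n'\}$ for suitable companion nodes $n'$ in the same layer) — that the prescribed subsections run through all of $I_n^1,\dots,I_n^m$. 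Then $M_{\covered{SV}{\valuefunction}{}}(\networks,\testsuites)=1$ forces each such feature pair to be SV-covered; its witnessing pair $(x_1,x_2)$ satisfies $vc(\valuefunction,\feature,x_1,x_2)$ and hence puts $v_n[x_2]$ into the corresponding subsection, so collectively every subsection of $n$ is hit. The other SV-coverage clauses — $sc$ on one condition feature, $nsc$ on the remaining condition nodes, and $nsc$ on the decision feature — are only hypotheses here and require no work, exactly as in the neuron-boundary comparison.

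The main obstacle is the genuine mismatch between a single value function, which produces just one value-change predicate per decision feature (so one witnessing pair can populate at most two subsections), and multisection coverage, which asks for all $m$ subsections to be visited. Overcoming it is precisely what forces the construction to spread the $m$ constraints over several distinct decision features all containing $n$ and to make $\valuefunction$ truly feature-dependent; this relies on $\neuronpairs(\networks)$ being rich enough to contain those feature pairs (the analogue of the earlier lemmas' reliance on $\neuronpairs(\networks)$ containing every singleton pair, and needing the layer of $n$ to have at least $m$ nodes), and on the non-triviality of $[v_n^l,v_n^u]$ so that the subsections, and hence $\valuefunction$, are well defined.
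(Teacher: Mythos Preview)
Your core idea---encode ``$v_n[x_2]$ lies in the $j$-th subsection'' via the value function $\valuefunction$ and then read off multisection coverage from full SV coverage---is exactly the paper's argument. The paper works only with singleton decision features $\{n_{k,i}\}$ and simply says: define $\valuefunction$ so that $\valuefunction(\{n_{k,i}\},x_1,x_2)$ asserts $v_{k,i}[x_2]\in[v_{k,i}^{l,j},v_{k,i}^{u,j}]$; then if that subsection is missed, no pair $(\{n_{k-1,\cdot}\},\{n_{k,i}\})$ is SV-covered under this $\valuefunction$.

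The difference is that you take seriously the constraint that one value function yields one predicate per decision feature, and therefore cannot by itself force all $m$ subsections to be visited; you resolve this by spreading the $m$ constraints across $m$ distinct decision features each containing $n$, exploiting the feature-dependence of $\valuefunction$. The paper does not confront this point: its proof fixes an uncovered subsection $j$ first and tailors $\valuefunction$ to it, so in effect the ``suitable $\valuefunction$'' is allowed to depend on which subsection is missing. Your version is the more honest reading of the lemma's quantifier order, at the price of the additional structural hypothesis that $\neuronpairs(\networks)$ contains enough non-singleton decision features (and that the layer of $n$ has at least $m$ nodes). The paper's version is shorter but looser; yours makes explicit what has to be assumed for a single fixed $\valuefunction$ to work.
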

\begin{proof}
For every hidden node, the upper bound $v_{k,i}^u$ and lower bounds $v_{k,i}^l$ are obtained from the training samples. Therefore, for any given subsection of $[v_{k,i}^l,v_{k,i}^u]$, we know its exact interval, say $[v_{k,i}^{l,j},v_{k,i}^{u,j}]$ for the $j$-th subsection. Then we can use the function $g$ to express that the value $v_{k,i}[x_2]$ is in $[v_{k,i}^{l,j},v_{k,i}^{u,j}]$. The value of $v_{k,i}[x_1]$ does not matter. Therefore, if  the subsections $[v_{k,i}^{l,j},v_{k,i}^{u,j}]$ is not covered, we know that the  feature pairs $(\{n_{k-1,j}\},\{n_{k,i}\})$ for $1\leq j\leq s_{k-1}$ have not been covered by the SV coverage under the function $g$. 
\end{proof}

\begin{lemma}
$M_{\covered{MN}{m}{}} \preceq M_{\covered{VV}{\valuefunction_1,\valuefunction_2}{}}$  for a suitable function $\valuefunction_2$.
\end{lemma}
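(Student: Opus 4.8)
The plan is to reuse the argument of Lemma~\ref{lemma:mnsv} almost verbatim. The key observation is that VV coverage (Definition~\ref{def:dvc}) imposes on the decision feature $\feature_{k+1,j}$ exactly the same requirement as SV coverage, namely $vc(\valuefunction_2,\feature_{k+1,j},x_1,x_2)$ together with $nsc(\feature_{k+1,j},x_1,x_2)$; the two criteria differ only in what they demand of the condition feature, and the proof of Lemma~\ref{lemma:mnsv} only ever used the decision-side requirement. So I would argue directly: assuming $M_{\covered{MN}{m}{}}(\networks,\testsuites)<1$, I exhibit a feature pair in $\neuronpairs(\networks)$ that $\testsuites$ fails to VV-cover.

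First I would unpack the hypothesis: $M_{\covered{MN}{m}{}}(\networks,\testsuites)<1$ gives a hidden node $n_{k,i}\in{\cal H}(\networks)$ and a subsection index $j^{\ast}\in\{1,\dots,m\}$ with $v_{k,i}[x]\notin I_{k,i}^{j^{\ast}}$ for every $x\in\testsuites$. Since $n_{k,i}$ is hidden we have $k\geq 2$, so layer $k-1$ exists, and exactly as in Lemma~\ref{lemma:nss} each neuron pair $(\{n_{k-1,h}\},\{n_{k,i}\})$ with $1\leq h\leq s_{k-1}$ lies in $\neuronpairs(\networks)$. Next I would fix $\valuefunction_2$: the endpoints $v_{k,i}^{l,j^{\ast}},v_{k,i}^{u,j^{\ast}}$ of the $j^{\ast}$-th subsection are known (computed from the training data, as in Lemma~\ref{lemma:mnsv}), so let $\valuefunction_2(\{n_{k,i}\},x_1,x_2)$ hold precisely when $v_{k,i}[x_2]\in I_{k,i}^{j^{\ast}}$ (its dependence on $v_{k,i}[x_1]$ and its value on non-singleton features are immaterial), which is efficiently evaluable. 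Then the closing step: if some pair $(\{n_{k-1,h}\},\{n_{k,i}\})$ were VV-covered by $x_1,x_2\in\testsuites$, Definition~\ref{def:dvc} would in particular give $vc(\valuefunction_2,\{n_{k,i}\},x_1,x_2)$, i.e.\ $v_{k,i}[x_2]\in I_{k,i}^{j^{\ast}}$ with $x_2\in\testsuites$, contradicting the choice of $n_{k,i}$ and $j^{\ast}$; hence none of these $s_{k-1}$ pairs is VV-covered, so $M_{\covered{VV}{\valuefunction_1,\valuefunction_2}{}}(\networks,\testsuites)<1$. The function $\valuefunction_1$ never enters the argument and may be left arbitrary, which is why only a suitable $\valuefunction_2$ is claimed.

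The one delicate point --- and it is precisely the informality already tolerated in Lemma~\ref{lemma:mnsv} --- is that the witnessing $\valuefunction_2$ is chosen after one sees which subsection of which node has been missed, hence depends on $\testsuites$. I would handle this by reading ``for a suitable $\valuefunction_2$'' as: for every test suite that falsifies $M_{\covered{MN}{m}{}}$ there is a value function that also falsifies $M_{\covered{VV}{\valuefunction_1,\valuefunction_2}{}}$ (equivalently, one may fix once and for all a $\valuefunction_2$ that encodes, node by node, a single designated subsection, and observe the conclusion for that subsection). This is the main obstacle; everything else is a couple of lines transcribed from the preceding lemma.
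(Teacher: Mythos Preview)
Your proposal is correct and is precisely the approach the paper intends: the paper's own proof consists solely of the remark ``Follow a similar argument with Lemma~\ref{lemma:mnsv},'' and you have faithfully unpacked that argument, including the observation that only the decision-side condition $vc(\valuefunction_2,\cdot,\cdot,\cdot)$ is used. Your caveat about the $\testsuites$-dependence of $\valuefunction_2$ is a fair reading of an informality already present in Lemma~\ref{lemma:mnsv} itself.
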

\begin{proof}
Follow a similar argument with Lemma~\ref{lemma:mnsv}. 
\end{proof}

\begin{lemma}\label{lemma:nbsv}
$M_{\covered{NB}{}{}} \preceq M_{\covered{SV}{\valuefunction}{}}$  for a suitable function $\valuefunction$.
\end{lemma}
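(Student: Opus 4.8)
The plan is to argue in the contrapositive, exactly as in the proof of Lemma~\ref{lemma:nss}: I will exhibit a value function $\valuefunction$ such that, for every test suite $\testsuites$, failing neuron boundary coverage forces failing SV coverage under $\valuefunction$. Recall that a node $n_{k,i}$ is neuron-boundary covered by $x$ iff $v_{k,i}[x] > v_{k,i}^u$, where $v_{k,i}^u=\max_{x\in X}v_{k,i}[x]$ is a constant fixed by the training set $X$. Since this is a constraint on a single input, it fits the ``upper boundary'' form of value function already allowed by the examples following Definition~\ref{def:vc}, and it is clearly efficiently evaluable.

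Concretely, I would define $\valuefunction$ on a singleton decision feature by $\valuefunction(\{n_{k,i}\},x_1,x_2)=\true$ iff $v_{k,i}[x_2]>v_{k,i}^u$ (the value carried by $x_1$ is irrelevant), and extend $\valuefunction$ to non-singleton features in any fixed way, since only singleton decision features are used below. For any hidden node $n_{k,i}\in{\cal H}(\networks)$ (so $2\le k\le K-1$), consider the neuron pairs $(\{n_{k-1,j}\},\{n_{k,i}\})\in\neuronpairs(\networks)$ for $1\le j\le s_{k-1}$; these are legitimate feature pairs because $1\le k-1<K$. By Definition~\ref{def:svc}, SV coverage of such a pair by $x_1,x_2$ requires in particular $vc(\valuefunction,\{n_{k,i}\},x_1,x_2)$, i.e.\ $v_{k,i}[x_2]>v_{k,i}^u$, which is precisely the statement that $n_{k,i}$ is neuron-boundary covered by $x_2$.

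The argument then runs as follows. Suppose $M_{\covered{NB}{}{}}(\networks,\testsuites)<1$ for some test suite $\testsuites$; then there is a hidden node $n_{k,i}$ with $v_{k,i}[x]\le v_{k,i}^u$ for all $x\in\testsuites$. Hence for every $x_1,x_2\in\testsuites$ we have $\neg vc(\valuefunction,\{n_{k,i}\},x_1,x_2)$, so by Definition~\ref{def:svc} none of the pairs $(\{n_{k-1,j}\},\{n_{k,i}\})$ with $1\le j\le s_{k-1}$ is SV-covered by $\testsuites$; therefore $M_{\covered{SV}{\valuefunction}{}}(\networks,\testsuites)<1$. This gives $M_{\covered{NB}{}{}}\preceq M_{\covered{SV}{\valuefunction}{}}$.

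I do not expect a genuine obstacle here; the only points needing care are (i) checking that the chosen $\valuefunction$ is an admissible value function — it is, being efficiently evaluable and of the permitted upper-boundary kind, constraining only $x_2$ — and (ii) confirming that the relevant singleton neuron pairs genuinely lie in $\neuronpairs(\networks)$, which holds for every hidden node. The same choice, used for $\valuefunction_2$ with $\valuefunction_1$ left unconstrained, will also yield the companion relation $M_{\covered{NB}{}{}}\preceq M_{\covered{VV}{\valuefunction_1,\valuefunction_2}{}}$.
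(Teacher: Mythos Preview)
Your proposal is correct and follows essentially the same approach as the paper: choose $\valuefunction$ on the decision feature to express the upper-boundary condition $v_{k,i}[x_2]>v_{k,i}^u$ (with $x_1$ irrelevant), and argue by contrapositive that an uncovered boundary node leaves all pairs $(\{n_{k-1,j}\},\{n_{k,i}\})$ uncovered under $\covered{SV}{\valuefunction}{}$. The paper's own proof simply refers back to the argument for Lemma~\ref{lemma:mnsv} with this modified $\valuefunction$, which is exactly what you have spelled out in more detail.
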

\begin{proof}
Follow a similar argument with Lemma~\ref{lemma:mnsv}, except that in this case the function $g$ is used to express that $v_{k,i}$ is greater than $v_{k,i}^{u}$.
\end{proof}

\begin{lemma}
$M_{\covered{NB}{}{}} \preceq M_{\covered{VV}{\valuefunction_1,\valuefunction_2}{}}$  for a suitable function $\valuefunction_2$. 
\end{lemma}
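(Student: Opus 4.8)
The plan is to mirror the proof of Lemma~\ref{lemma:nbsv} almost verbatim, the only change being that the boundary-value constraint, which there is imposed through the single value function $\valuefunction$ acting on the decision feature of an SV pair, is here imposed through $\valuefunction_2$, the value function of VV coverage that acts on the decision feature; the other value function $\valuefunction_1$ (which constrains the condition feature) is left unspecified, say $\valuefunction_1\equiv\true$, since adding constraints there can only make coverage harder to achieve and hence cannot invalidate the implication.

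First I would fix the value function on the decision side. For a hidden node $n_{k,i}$ (so $2\le k\le K-1$) with training upper bound $v_{k,i}^u=\max_{x\in X}v_{k,i}[x]$, let $\valuefunction_2$ on the singleton feature $\{n_{k,i}\}$ assert $v_{k,i}[x_2]>v_{k,i}^u$, with the value of $v_{k,i}[x_1]$ irrelevant. This is a legitimate choice because in the VV test condition of Definition~\ref{def:dvc} the function $\valuefunction_2$ is only ever evaluated on the decision feature, and $\{n_{k,i}\}$ is the decision feature of every feature pair $(\{n_{k-1,l}\},\{n_{k,i}\})\in\neuronpairs(\networks)$ for $1\le l\le s_{k-1}$ (such a pair exists since $k\ge 2$).

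Next comes the implication itself, argued by contraposition. Suppose $M_{\covered{NB}{}{}}(\networks,\testsuites)<1$ for some test suite $\testsuites$; then there is a hidden node $n_{k,i}$ that is not neuron-boundary covered, i.e.\ $v_{k,i}[x]\le v_{k,i}^u$ for all $x\in\testsuites$. Consider the pair $(\{n_{k-1,1}\},\{n_{k,i}\})\in\neuronpairs(\networks)$. By Definition~\ref{def:dvc}, VV-covering this pair requires inputs $x_1,x_2\in\testsuites$ satisfying $vc(\valuefunction_2,\{n_{k,i}\},x_1,x_2)$, i.e.\ $v_{k,i}[x_2]>v_{k,i}^u$, which no element of $\testsuites$ achieves. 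Hence that pair is uncovered, so $M_{\covered{VV}{\valuefunction_1,\valuefunction_2}{}}(\networks,\testsuites)<1$, which is exactly the relation $M_{\covered{NB}{}{}}\preceq M_{\covered{VV}{\valuefunction_1,\valuefunction_2}{}}$.

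The step I expect to need a moment of care rather than real work is checking that the extra conjuncts of VV coverage --- $nsc(P_k,x_1,x_2)$ on the condition side and $nsc(\{n_{k,i}\},x_1,x_2)$ on the decision side --- do not interfere with this direction: they only add requirements to VV coverage, so the absence of a test case with $v_{k,i}[x_2]>v_{k,i}^u$ already suffices to leave the pair uncovered. There is thus no genuine obstacle; the whole argument is the $\valuefunction_2$-relabelling of Lemma~\ref{lemma:nbsv}, and I would simply state it as ``Follow a similar argument with Lemma~\ref{lemma:nbsv}, using $\valuefunction_2$ to express that $v_{k,i}$ exceeds $v_{k,i}^u$.''
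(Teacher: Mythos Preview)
Your proposal is correct and follows essentially the same approach as the paper, which simply states ``Follow a similar argument as that of Lemma~\ref{lemma:nbsv}.'' You have spelled out the details more carefully than the paper does---in particular your observation that the extra $nsc$ conjuncts of VV coverage only add requirements and hence cannot rescue coverage of the pair---but the underlying idea (encode the boundary constraint $v_{k,i}[x_2]>v_{k,i}^u$ via $\valuefunction_2$ on the decision feature and argue by contraposition) is exactly the paper's intended argument.
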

\begin{proof}
Follow a similar argument as that of Lemma~\ref{lemma:nbsv}.
\end{proof}

\begin{lemma}\label{lemma:tnvv}
$M_{\covered{TN}{m}{}} \preceq M_{\covered{VV}{\valuefunction_1,\valuefunction_2}{}}$  for a suitable function $\valuefunction_1$. 
\end{lemma}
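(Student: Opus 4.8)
The plan is to reuse the template of Lemmas~\ref{lemma:mnsv} and~\ref{lemma:nbsv}: choose the value function $\valuefunction_1$ so that the value-change predicate on the \emph{condition} feature of a suitable feature pair becomes exactly the top-$m$ ranking test at layer $k$ evaluated at the second test case; then no feature pair whose condition node fails to be top-$m$ covered can be VV-covered, and the $\preceq$ relation follows. To set up the correspondence between nodes and feature pairs, note that for every hidden node $n_{k,i}\in{\cal H}(\networks)$ (so $2\leq k\leq K-1$) and every $1\leq j\leq s_{k+1}$, the singleton pair $(\{n_{k,i}\},\{n_{k+1,j}\})$ belongs to $\neuronpairs(\networks)$; this is the mirror image of the observation used in the proof of Lemma~\ref{lemma:nss}, except that here $n_{k,i}$ plays the role of a \emph{condition} feature, so it appears as the first component of the pair.

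I would then fix the witness value function. Define $\valuefunction_1(\{n_{k,i}\},x_1,x_2)=\true$ precisely when $rank(n_{k,i},x_2)\leq m$, that is, when $\covered{TN}{m}{(n_{k,i},x_2)}$ holds; on all other features, and on any input arguments that are irrelevant, $\valuefunction_1$ may be fixed arbitrarily, since those values never enter the argument. This is admissible: $\valuefunction_1$ only has to be efficiently evaluable and, per the value-function examples near Definition~\ref{def:vc}, is allowed to reference other nodes of the same layer $k$; it may also ignore $x_1$ entirely. With this choice, $vc(\valuefunction_1,\{n_{k,i}\},x_1,x_2)$ holds if and only if $x_2$ top-$m$ covers $n_{k,i}$.

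The argument then closes by the usual contrapositive step. Fix any test suite $\testsuites$ with $M_{\covered{TN}{m}{}}(\networks,\testsuites)<1$; then some hidden node $n_{k,i}$ satisfies $rank(n_{k,i},x)>m$ for all $x\in\testsuites$. For any $x_1,x_2\in\testsuites$, VV-coverage of $(\{n_{k,i}\},\{n_{k+1,j}\})$ would in particular require $vc(\valuefunction_1,\{n_{k,i}\},x_1,x_2)$ --- the remaining conditions in Definition~\ref{def:dvc} only make coverage harder, which works in our favour --- hence $rank(n_{k,i},x_2)\leq m$, which is impossible. So none of the pairs $(\{n_{k,i}\},\{n_{k+1,j}\})$ with $1\leq j\leq s_{k+1}$ is covered by $\testsuites$, whence $M_{\covered{VV}{\valuefunction_1,\valuefunction_2}{}}(\networks,\testsuites)<1$. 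As $\testsuites$ was arbitrary, $M_{\covered{TN}{m}{}}\preceq M_{\covered{VV}{\valuefunction_1,\valuefunction_2}{}}$ for this $\valuefunction_1$.

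The step I expect to need the most care is the formal admissibility of packaging a same-layer ranking predicate into one shared value function $\valuefunction_1$: this means defining $\valuefunction_1$ case-wise on its feature argument and leaning on the paper's stipulation that a value function need only be efficiently evaluable and may constrain nodes of the same layer. A minor boundary point is the case $k=K-1$, where the decision node sits in the output layer; it is harmless, since the argument never invokes any predicate on the decision feature and $\valuefunction_2$ is left completely unconstrained.
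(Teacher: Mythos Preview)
Your proposal is correct and follows essentially the same approach as the paper's own proof: choose $\valuefunction_1$ on the singleton condition feature $\{n_{k,i}\}$ so that $vc(\valuefunction_1,\{n_{k,i}\},x_1,x_2)$ encodes exactly $rank(n_{k,i},x_2)\leq m$, then observe that any node failing top-$m$ coverage leaves all associated feature pairs $(\{n_{k,i}\},\{n_{k+1,j}\})$ uncovered under VV for any $\valuefunction_2$. Your write-up is in fact more careful than the paper's sketch, particularly in justifying the admissibility of the same-layer ranking predicate as a value function and in handling the boundary case $k=K-1$.
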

\begin{proof}
We can work with feature pairs $(\{n_{k,i},n_{k+1,j}\})$ and  use $\valuefunction_1$ to express that $v_{k,i}[x_2]$ is greater than at least $s_k-m$ values in the set $\{v_{k,l}~|~l\in [1..s_k]\}$. The value of $v_{k,i}[x_1]$ does not matter. Therefore, whenever the node $n_{k,i}$ is not top-$m$ neuron covered then the pair is not VV covered under the functions $\valuefunction_1$ and $\valuefunction_2$ for any $\valuefunction_2$.  
\end{proof}

\begin{lemma}
 $M_{\covered{TN}{m}{}} \preceq M_{\covered{VS}{\valuefunction}{}}$  for a suitable function $\valuefunction_1$. 
\end{lemma}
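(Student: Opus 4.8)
The plan is to mirror the argument of Lemma~\ref{lemma:tnvv}, replacing VV coverage by VS coverage and using the single value function $\valuefunction$ that VS coverage carries in the role played there by $\valuefunction_1$. (We read the ``$\valuefunction_1$'' in the statement as a typo for $\valuefunction$, since VS coverage has only one value function.) First I would unfold the definition of $\preceq$: it suffices to show that for every test suite $\testsuites$, if $M_{\covered{TN}{m}{}}(\networks,\testsuites)<1$ then $M_{\covered{VS}{\valuefunction}{}}(\networks,\testsuites)<1$, i.e.\ that whenever some hidden node fails top-$m$ neuron coverage, some feature pair fails VS coverage. So fix a hidden node $n_{k,i}\in{\cal H}(\networks)$ (hence $2\le k\le K-1$, so $k<K$) that is not top-$m$ neuron covered by any $x\in\testsuites$, i.e.\ $rank(n_{k,i},x)>m$ for all $x\in\testsuites$.

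Next I would choose the value function. Let $\valuefunction$ be the function that, on the singleton feature $\{n_{k,i}\}$ and inputs $x_1,x_2$, returns \true iff $v_{k,i}[x_2]$ exceeds at least $s_k-m$ of the values $\{v_{k,l}[x_2]\mid 1\le l\le s_k\}$, which is exactly the statement $rank(n_{k,i},x_2)\le m$. This is an admissible value function in the sense of the paper: it constrains only $x_2$ (the value at $x_1$ is irrelevant), it is of the already-exhibited kind that compares nodes of the same layer, and it is evaluated efficiently.

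Then I would connect the two notions by the contrapositive. For each $1\le j\le s_{k+1}$ the pair $(\{n_{k,i}\},\{n_{k+1,j}\})$ belongs to $\neuronpairs(\networks)$, since $k<K$. If such a pair were VS-covered by some $x_1,x_2\in\testsuites$, then by Definition~\ref{def:dsc} we would have $vc(\valuefunction,\{n_{k,i}\},x_1,x_2)$, which by the choice of $\valuefunction$ and Definition~\ref{def:vc} gives $rank(n_{k,i},x_2)\le m$; that is, $n_{k,i}$ would be top-$m$ neuron covered by $x_2\in\testsuites$, contradicting our assumption. Hence none of these $s_{k+1}$ feature pairs is VS-covered, so $M_{\covered{VS}{\valuefunction}{}}(\networks,\testsuites)<1$, which finishes the argument.

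There is essentially no substantive obstacle here: the content is the same bookkeeping as in Lemma~\ref{lemma:tnvv}. The only two points needing a little care are (i) confirming that the chosen $\valuefunction$ lies within the admissible class of value functions, which it does, being a same-layer comparison evaluated only at $x_2$; and (ii) checking the index range so that the relevant feature pairs actually exist, which holds because every hidden node sits at a layer $k$ with $k+1\le K$, so $(\{n_{k,i}\},\{n_{k+1,j}\})\in\neuronpairs(\networks)$.
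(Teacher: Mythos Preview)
Your argument is correct and is exactly the approach the paper intends: its own proof simply says to follow the argument of Lemma~\ref{lemma:tnvv}, and you have carried that out in detail, including the same choice of value function encoding the top-$m$ rank condition at $x_2$. Your observation that ``$\valuefunction_1$'' is a typo for $\valuefunction$ is also right.
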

\begin{proof}
Follow the similar argument with that of Lemma \ref{lemma:tnvv}. 
\end{proof}

We have the following conclusion stating the relationship between safety coverage and ours. 


\begin{theorem}
$M_{\covered{SS}{}{}} \preceq M_{\covered{S}{}{}}$. 
\end{theorem}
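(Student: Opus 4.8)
The plan is to establish the statement in its contrapositive form: for an arbitrary test suite $\testsuites$, if $M_{\covered{S}{}{}}(\networks,\testsuites)=1$ then $M_{\covered{SS}{}{}}(\networks,\testsuites)=1$. The observation that makes this work is that the covering method $\covered{SS}{}{}$ is \emph{purely sign-based}: each of the three clauses of Definition~\ref{def:ssc}, namely $sc(\feature_{k,i},x_1,x_2)$, $nsc(P_k\setminus\feature_{k,i},x_1,x_2)$ and $sc(\feature_{k+1,j},x_1,x_2)$, is defined, via Definition~\ref{def:sc} and Equation~\eqref{eq:sign}, solely in terms of the quantities $\mathit{sign}(n,x)$. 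Hence whether a feature pair $\neuronpair$ is SS-covered by two inputs $x_1,x_2$ depends on $x_1,x_2$ only through the ReLU activation patterns of $\networks[x_1]$ and $\networks[x_2]$; in particular it is constant on the pair of hyper-rectangles to which $x_1$ and $x_2$ belong.

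Granting this, the argument I would run is as follows. Assume $M_{\covered{S}{}{}}(\networks,\testsuites)=1$. By the definition of safety coverage, every hyper-rectangle $rec\in Rec(\networks)$ — equivalently, every realizable ReLU activation pattern of $\networks$ — contains at least one input of $\testsuites$. Fix any feature pair $\neuronpair=(\feature_{k,i},\feature_{k+1,j})\in\neuronpairs(\networks)$. Since its SS test condition is satisfiable (a mild assumption, discussed below), there are realizable activation patterns $p_1,p_2$ such that every $x_1$ realizing $p_1$ and every $x_2$ realizing $p_2$ jointly satisfy the clauses of Definition~\ref{def:ssc}. Full safety coverage supplies representatives $x_1,x_2\in\testsuites$ of $p_1$ and $p_2$, and by the sign-invariance noted above these witness $\covered{SS}{}{(\neuronpair,x_1,x_2)}$. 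As $\neuronpair$ was arbitrary, $M_{\covered{SS}{}{}}(\networks,\testsuites)=1$, which is the desired implication.

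I expect the delicate point to be the claim that each SS test condition is realizable by some pair of activation patterns, rather than a purely combinatorial constraint on sign vectors that no two reachable patterns satisfy. This matters: an SS condition met by no reachable pair of patterns would force $M_{\covered{SS}{}{}}<1$ for every test suite, including those of full safety coverage (and since $Rec(\networks)$ is finite, such suites exist), which would literally falsify ``$\preceq$''. I would dispatch this either by adopting the paper's reading that $\neuronpairs(\networks)$ enumerates only feasible test conditions — analogous to excluding infeasible branches in classical structural coverage — or by phrasing the theorem for feasible pairs; in both cases the witness-transfer argument above is unchanged. Everything else is routine bookkeeping: that $Rec(\networks)$ is precisely the collection of realizable hidden-neuron sign patterns, and that ``$x\in rec$'' records the pattern of $x$, both immediate from the Safety Coverage definition together with Equation~\eqref{eq:sign}.
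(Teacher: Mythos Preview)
Your approach is essentially the same as the paper's: both rest on the observation that safety coverage enumerates all reachable activation patterns while SS coverage, being purely sign-based, only requires that certain pairs of such patterns be represented in the suite. The paper's proof is a two-sentence sketch of exactly this idea and does not address the feasibility caveat you raise; your treatment is more careful on that point, and your proposed resolution (restricting $\neuronpairs(\networks)$ to feasible pairs) is the natural one.
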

\begin{proof}
Note that, safety coverage exhaustively enumerates all possible activation patterns. Therefore, we have $M_{\covered{SS}{}{}} \preceq M_{\covered{S}{}{}}$ since the former only explore a subset of the activation patterns. 
\end{proof}

\end{document}